\newtheorem{lem}{Lemma}
\newtheorem{thm}[lem]{Theorem}
\newtheorem{defn}[lem]{Definition}
\newcommand{\newreptheorem}[2]{\newtheorem*{rep@#1}{\rep@title} 
\newenvironment{rep#1}[1]{\def\rep@title{#2 \ref*{##1}}\begin{rep@#1}}{\end{rep@#1}}
}
\newcommand*\rfrac[2]{{}^{#1}\!/_{#2}}
\newcommand{\R}{{\mathbb R}}
\newcommand{\RR}{{\cal R}}
\renewcommand{\L}{{\cal L}}
\newcommand{\T}{{\mathbb T}}
\newcommand{\hT}{\hat{\mathbb T}}
\newcommand{\W}{{\cal W}}
\newcommand{\Q}{{\cal Q}}
\renewcommand{\P}{{\cal P}}
\newcommand{\ZZ}{{\cal Z}}
\newcommand{\bc}[1]{\left\{{#1}\right\}}
\newcommand{\br}[1]{\left({#1}\right)}
\newcommand{\bs}[1]{\left[{#1}\right]}
\newcommand{\abs}[1]{\left| {#1} \right|}
\newcommand{\norm}[1]{\left\| {#1} \right\|}
\newcommand{\bsd}[1]{\left\llbracket{#1}\right\rrbracket}
\renewcommand{\O}[1]{{\cal O}\br{{#1}}}
\newcommand{\softO}[1]{\tilde{\cal O}\br{{#1}}}
\newcommand{\Om}[1]{\Omega\br{{#1}}}
\newcommand{\ip}[2]{\left\langle{#1},{#2}\right\rangle}
\renewcommand{\vec}[1]{{\mathbf{#1}}}
\newcommand{\vecx}{\vec{x}}
\newcommand{\vecy}{\vec{y}}
\newcommand{\vecw}{\vec{w}}
\newcommand{\Zb}{\vec{Z}}
\newcommand{\veczero}{\vec{0}}
\newcommand{\x}{\vec{x}}
\newcommand{\bx}{\bar\x}
\newcommand{\by}{\bar y}
\newcommand{\w}{\vec{w}}
\renewcommand{\v}{\vec{v}}
\newcommand{\Ind}[1]{{\mathbb I}\bs{{#1}}}
\renewcommand{\th}{^{\text{th}}}
\newcommand{\spmb}{{\bfseries 1PMB}\xspace}
\newcommand{\tpmb}{{\bfseries 2PMB}\xspace}
\newcommand{\preck}{Prec$_{@k}$\xspace}
\title{Online and Stochastic Gradient Methods for Non-decomposable Loss Functions}
\author{
Purushottam Kar$^\ast$ \quad Harikrishna Narasimhan$^\dagger$ \quad Prateek Jain$^\ast$\\
$^\ast$Microsoft Research, INDIA\\
$^\dagger$Indian Institute of Science, Bangalore, INDIA\\
\texttt{\{t-purkar,prajain\}@microsoft.com, harikrishna@csa.iisc.ernet.in}
}
\begin{document}

\maketitle

\begin{abstract}
Modern applications in sensitive domains such as biometrics and medicine frequently require the use of \emph{non-decomposable} loss functions such as precision$@k$, F-measure etc. Compared to point loss functions such as hinge-loss, these offer much more fine grained control over prediction, but at the same time present novel challenges in terms of algorithm design and analysis. In this work we initiate a study of online learning techniques for such non-decomposable loss functions with an aim to enable incremental learning as well as design scalable solvers for batch problems. To this end, we propose an online learning framework for such loss functions. Our model enjoys several nice properties, chief amongst them being the existence of efficient online learning algorithms with sublinear regret and online to batch conversion bounds. Our model is a provable extension of existing online learning models for point loss functions. We instantiate two popular losses, \preck and pAUC, in our model and prove sublinear regret bounds for both of them. Our proofs require a novel structural lemma over ranked lists which may be of independent interest. We then develop scalable stochastic gradient descent solvers for non-decomposable loss functions. We show that for a large family of loss functions satisfying a certain uniform convergence property (that includes \preck, pAUC, and F-measure), our methods provably converge to the empirical risk minimizer. Such uniform convergence results were not known for these losses and we establish these using novel proof techniques. We then use extensive experimentation on real life and benchmark datasets to establish that our method can be orders of magnitude faster than a recently proposed cutting plane method.
\end{abstract}

\section{Introduction}
Modern learning applications frequently require a level of fine-grained control over prediction performance that is not offered by traditional ``per-point'' performance measures such as hinge loss. Examples include datasets with mild to severe label imbalance such as spam classification wherein positive instances (spam emails) constitute a tiny fraction of the available data, and learning tasks such as those in medical diagnosis which make it imperative for learning algorithms to be sensitive to class imbalances. Other popular examples include ranking tasks where precision in the top ranked results is valued more than overall precision/recall characteristics.
The performance measures of choice in these situations are those that evaluate algorithms over the entire dataset in a holistic manner. Consequently, these measures are frequently \emph{non-decomposable} over data points. More specifically, for these measures, the loss on a set of points cannot be expressed as the sum of losses on individual data points (unlike hinge loss, for example). Popular examples of such measures include F-measure, Precision$@k$, (partial) area under the ROC curve etc.

Despite their success in these domains, non-decomposable loss functions are not nearly as well understood as their decomposable counterparts. The study of point loss functions has led to a deep understanding about their behavior in batch and online settings and tight characterizations of their generalization abilities. The same cannot be said for most non-decomposable losses. For instance, in the popular online learning model, it is difficult to even instantiate a non-decomposable loss function as defining the per-step penalty itself becomes a challenge.

\vspace*{-0.5ex}

\subsection{Our Contributions}
Our first main contribution is a framework for online learning with non-decomposable loss functions. The main hurdle in this task is a proper definition of instantaneous penalties for non-decomposable losses. Instead of resorting to canonical definitions, we set up our framework in a principled way that fulfills the objectives of an online model. Our framework has a very desirable characteristic that allows it to recover existing online learning models when instantiated with point loss functions. Our framework also admits online-to-batch conversion bounds.

We then propose an efficient Follow-the-Regularized-Leader \cite{Rakhlin09} algorithm within our framework. We show that for loss functions that satisfy a generic ``stability'' condition, our algorithm is able to offer vanishing $\O{\frac{1}{\sqrt T}}$ regret. Next, we instantiate within our framework, convex surrogates for two popular performances measures namely, Precision at $k$ (\preck) and partial area under the ROC curve (pAUC) \cite{NarasimhanA13a} and show, via a stability analysis, that we do indeed achieve sublinear regret bounds for these loss functions. Our stability proofs involve a structural lemma on sorted lists of inner products which proves Lipschitz continuity properties for measures on such lists (see Lemma~\ref{lem:list}) and might be useful for analyzing non-decomposable loss functions in general.

A key property of online learning methods is their applicability in designing solvers for offline/batch problems. With this goal in mind, we design a stochastic gradient-based solver for non-decomposable loss functions. Our methods apply to a wide family of loss functions (including \preck, pAUC and F-measure) that were introduced in \cite{Joachims05} and have been widely adopted \cite{YueFRJ07,ChakrabartiKSB08,McFeeL10} in the literature.
We design several variants of our method and show that our methods provably converge to the empirical risk minimizer of the learning instance at hand. Our proofs involve uniform convergence-style results which were not known for the loss functions we study and require novel techniques, in particular the structural lemma mentioned above.

Finally, we conduct extensive experiments on real life and benchmark datasets with pAUC and \preck as performance measures. We compare our methods to state-of-the-art methods that are based on cutting plane techniques \cite{NarasimhanA13b}. The results establish that our methods can be significantly faster, all the while offering comparable or higher accuracy values. For example, on a KDD 2008 challenge dataset, our method was able to achieve a pAUC value of 64.8\% within 30ms whereas it took the cutting plane method more than 1.2 seconds to achieve a comparable performance.




\vspace*{-0.5ex}

\subsection{Related Work}
Non decomposable loss functions such as \preck, (partial) AUC, F-measure etc, owing to their demonstrated ability to give better performance in situations with label imbalance etc, have generated significant interest within the learning community. From their role in early works as indicators of performance on imbalanced datasets \cite{KubatM97}, their importance has risen to a point where they have become the learning objectives themselves. Due to their complexity, methods that try to indirectly optimize these measures are very common e.g. \cite{DembczynskiWCH11}, \cite{YeCLC12} and \cite{DembczynskiJKWH13} who study the F-measure. However, such methods frequently seek to learn a complex probabilistic model, a task arguably harder than the one at hand itself. On the other hand are algorithms that perform optimization directly via structured losses. Starting from the seminal work of \cite{Joachims05}, this method has received a lot of interest for measures such as the F-measure \cite{Joachims05}, average precision \cite{YueFRJ07}, pAUC \cite{NarasimhanA13b} and various ranking losses \cite{ChakrabartiKSB08,McFeeL10}. These formulations typically use cutting plane methods to design dual solvers.

We note that the learning and game theory communities are also interested in non-additive notions of regret and utility. In particular \cite{RakhlinST11} provides a generic framework for online learning with non-additive notions of regret with a focus on showing regret bounds for mixed strategies in a variety of problems. However, even polynomial time implementation of their strategies is difficult in general. Our focus, on the other hand, is on developing efficient online algorithms that can be used to solve large scale batch problems. Moreover, it is not clear how (if at all) can the loss functions considered here (such as \preck) be instantiated in their framework.

Recently, online learning for AUC maximization has received some attention \cite{KarSJK13,oam-icml}. Although AUC is not a point loss function, it still decomposes over pairs of points in a dataset, a fact that \cite{KarSJK13} and \cite{oam-icml} crucially use. The loss functions in this paper do not exhibit any such decomposability.


\section{Problem Formulation}
\label{sec:formulation}
Let $\x_{1:t}:=\{\x_1, \dots, \x_t\}$, $\x_i \in \R^d$ and $y_{1:t}:=\{y_1, \dots, y_t\}$, $y_i \in \bc{-1,1}$ be the observed data points and {\em true} binary labels. We will use $\widehat{y}_{1:t}:=\{\widehat{y}_1, \dots, \widehat{y}_t\}$, $\widehat{y}_i \in \R$ to denote the predictions of a learning algorithm. We shall, for sake of simplicity, restrict ourselves to linear predictors $\widehat{y}_i = \w^\top\x_i$ for parameter vectors $\w\in\R^d$. A performance measure $\P: \{-1, 1\}^t \times  \R^t \rightarrow \R_+$ shall be used to evaluate the the predictions of the learning algorithm against the true labels. Our focus shall be on non-decomposable performance measures such as \preck, partial AUC etc.

Since these measures are typically non-convex, convex surrogate {\em loss functions} are used instead (we will use the terms \emph{loss function} and \emph{performance measure} interchangeably). A popular technique for constructing such loss functions is the \emph{structural SVM} formulation \cite{Joachims05} given below. For simplicity, we shall drop mention of the training points and use the notation $\ell_{\P}(\w) := \ell_{\P}(\vecx_{1:T}, y_{1:T}, \vecw)$.
\begin{equation}
  \label{eq:ssvm_loss}
  \ell_{\P}(\vecw)=\max_{\bar{\vecy}\in \{-1, +1\}^T} \sum_{i=1}^T (\bar{y}_i - y_i) \vecx_i^\top\vecw - \P(\bar{\vecy}, \vecy). 
\end{equation}
\textbf{Precision$\mathbf{@k}$.} The \preck measure ranks the data points in order of the predicted scores $\widehat{y}_i$ and then returns the number of true positives in the top ranked positions. This is valuable in situations where there are very few positives. To formalize this, for any predictor $\w$ and set of points $\x_{1:t}$, define $S(\x,\w):=\{j : \w^\top\x>\w^\top\x_j\}$ to be the set of points which $\w$ ranks above $\x$. Then define
\begin{equation}
  \label{eq:ind_pauc}
  {\mathbb T}_{\beta, t}(\vecx, \vecw)=\begin{cases} 1,&\ \text{ if }\ |S(\x,\w)|< \lceil\beta t\rceil,\\
0, &\ \text{otherwise}.\end{cases}
\end{equation}
i.e. ${\mathbb T}_{\beta, t}(\vecx, \vecw)$ is non-zero iff $\vecx$ is in the top-$\beta$ fraction of the set. Then we define\footnote{An equivalent definition considers $k$ to be the \emph{number} of top ranked points instead.}
\[
\text{\preck}(\w) := \sum_{j:\T_{k,t}(\x_j,\w) = 1}\Ind{y_j = 1}.
\]
The structural surrogate for this measure is then calculated as \footnote{\cite{Joachims05} uses a slightly modified, but equivalent, definition that considers labels to be Boolean.}
\begin{equation}
  \label{eq:prec}
  \ell_{\text{\preck}}(\vecw) = \max_{\substack{\bar{\vecy}\in \{-1, +1\}^t\\\sum_i (\bar{y}_i+1)=2kt}} \sum_{i=1}^t (\bar{y}_i-y_i) \vecx_i^T\vecw - \sum_{i=1}^t y_i\bar{y}_i. 
\end{equation}
\textbf{Partial AUC.} This measures the area under the ROC curve with the false positive rate restricted to the range $[0,\beta]$. This is in contrast to AUC that allows false positive range in $[0,1]$. pAUC is useful in medical applications such as cancer detection where a small false positive rate is desirable. Let us extend notation to use $\T_{\beta,t}^-(\x,\w)$ to denote the indicator that selects the top $\beta$ fraction of the \emph{negatively} labeled points i.e. $\T_{\beta,t}^-(\x,\w) = 1$ iff $\abs{\bc{j : y_j < 0, \w^\top\x>\w^\top\x_j}} \leq \lceil\beta t_-\rceil$ where $t_-$ is the number of negatives. Then we define
\begin{equation}
\text{pAUC}(\w) = \sum_{i:y_i > 0}\sum_{j:y_j<0}\mathbb{T}^-_{\beta, t}(\x_j, \w)\\ \cdot {\mathbb I}[\x_i^\top\w \geq \x_j^\top\w].
\end{equation}
The structural surrogate for this performance measure can be equivalently expressed in a simpler form by replacing the indicator functions $\Ind{\cdot}$ with hinge loss as follows (see \cite{NarasimhanA13b}, Theorem 4)
\begin{equation}
\label{eq:pauc_cvx}
\ell_{\text{pAUC}}(\w) = \sum_{i:y_i>0}\sum_{j:y_j<0}\mathbb{T}^-_{\beta, t}(\vecx_j, \vecw)\cdot h(\vecx_i^\top\vecw- \vecx_j^\top\vecw),
\end{equation}
where $h(c)=\max(0, 1-c)$ is the hinge loss function.

In the next section we will develop an online learning framework for non-decomposable performance measures and instantiate our framework with the above mentioned loss functions $\ell_{\text{\preck}}$ and $\ell_{\text{pAUC}}$. Then in Section~\ref{sec:stochastic}, we will develop stochastic gradient methods for non-decomposable loss functions and prove error bounds for the same. There we will focus on a much larger family of loss functions including \preck, pAUC and F-measure.

\section{Online Learning with Non-decomposable Loss Functions}
We now present our online learning framework for non-decomposable loss functions. Traditional online learning takes place in several rounds, in each of which the player proposes some $\w_t\in \W$ while the adversary responds with a penalty function $\L_t : \W \rightarrow \R$ and a loss $\L_t(\w_t)$ is incurred. The goal is to minimize the \emph{regret} i.e. $\sum_{t=1}^T\L_t(\w_t) - \mathop{\arg\min}_{\w\in \W} \sum_{t=1}^T\L_t(\w)$. For point loss functions, the \emph{instantaneous} penalty $\L_t(\cdot)$ is encoded using a data point $(\vecx_t,y_t) \in \R^d \times \{-1, 1\}$ as $\L_t(\w) = \ell_\P(\x_t,y_t,\w)$. However, for (surrogates of) non-decomposable loss functions such as $\ell_{\text{pAUC}}$ and $\ell_{\text{\preck}}$ the definition of instantaneous penalty itself is not clear and remains a challenge.

To guide us in this process we turn to some properties of standard online learning frameworks. For point losses, we note that the best solution in hindsight is also the batch optimal solution. This is equivalent to the condition $\mathop{\arg\min}_{\w\in \W}\sum_{t=1}^T\L_t(\w) = \mathop{\arg\min}_{\w\in \W}\ell_{\P}(\vecx_{1:T}, y_{1:T}, \w)$ for non-decomposable losses. Also, since the batch optimal solution is agnostic to the ordering of points, we should expect $\sum_{t=1}^T\L_t(\w)$ to be invariant to permutations within the stream. By pruning away several naive definitions of $\L_t$ using these requirements, we arrive at the following definition:
\begin{equation}
	\label{eq:lt}
	\L_t(\w) = \ell_{\P}(\x_{1:t}, y_{1:t}, \w) - \ell_{\P}(\x_{1:(t-1)}, y_{1:(t-1)}, \w).
\end{equation} 

It turns out that the above is a very natural penalty function as it measures the amount of ``extra'' penalty incurred due to the inclusion of $\x_t$ into the set of points. It can be readily verified that $\sum_{t=1}^T \L_t(\w)=\ell_{\P}(\vecx_{1:T}, y_{1:T}, \w)$ as required. Also, this penalty function seamlessly generalizes online learning frameworks since for point losses, we have $\ell_{\P}(\x_{1:t},y_{1:t},\w) = \sum_{i=1}^t\ell_\P(\x_i,y_i,\w)$ and thus $\L_t(\w)=\ell_\P(\vecx_t, y_t, \w)$. We note that our framework also recovers the model for online AUC maximization used in \cite{KarSJK13} and \cite{oam-icml}. The notion of regret corresponding to this penalty is
\[
R(T)=\frac{1}{T}\sum_{t=1}^T \L_t(\w_t)-\mathop{\arg\min}_{\w\in \W}\frac{1}{T}\ell_\P(\vecx_{1:T}, y_{1:T}, \w).
\]
We note that $\L_t$, being the difference of two loss functions, is non-convex in general and thus, standard online convex programming regret bounds cannot be applied in our framework. Interestingly, as we show below, by exploiting structural properties of our penalty function, we can still get efficient low-regret learning algorithms, as well as online-to-batch conversion bounds in our framework.

\subsection{Low Regret Online Learning}
We propose an efficient Follow-the-Regularized-Leader (FTRL) style algorithm in our framework. Let $\vecw_1=\arg\min_{\vecw\in \W}\|\vecw\|_2^2$ and consider the following update:
\begin{align}
	\label{eq:ftrl} 
	\vecw_{t+1}&=\arg\min_{\vecw\in \W} \sum_{t=1}^t \L_t(\vecw)+\frac{\eta}{2}\|\vecw\|_2^2 = \arg\min_{\vecw\in \W}\ell_{\P}(\x_{1:t}, y_{t:t}, \vecw)+\frac{\eta}{2}\|\vecw\|_2^2\tag*{(FTRL)}
\end{align}

We would like to stress that despite the non-convexity of $\L_t$, the FTRL objective is strongly convex if $\ell_{\P}$ is convex and thus the update can be implemented efficiently by solving a regularized batch problem on $\x_{1:t}$. We now present our regret bound analysis for the FTRL update given above.
\begin{thm}
\label{thm:online}
Let $\ell_{\P}(\cdot, \vecw)$ be a convex loss function and $\W \subseteq \R^d$ be a convex set. Assume w.l.o.g. $\|\x_t\|_2\leq 1, \forall t$. Also, for the penalty function $\L_t$ in \eqref{eq:lt}, let $|\L_t(\vecw)-\L_t(\vecw')|\leq G_t\cdot\|\vecw-\vecw'\|_2, $ for all $t$ and all $\vecw, \vecw'\in \W$ for some $G_t > 0$. Suppose we use the update step given in \eqref{eq:ftrl} to obtain $\vecw_{t+1}, 0 \leq t \leq T-1$. Then for all $\w^\ast$, we have
\[
\frac{1}{T}\sum_{t=1}^T \L_t(\vecw_t)\leq \frac{1}{T}\ell_{\P}(\vecx_{1:T}, y_{1:T}, \vecw^\ast)+\norm{\w^\ast}_2\frac{\sqrt{2\sum_{t=1}^TG_t^2}}{T}.
\]
\end{thm}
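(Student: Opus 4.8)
The plan is to run the standard Follow-the-Regularized-Leader analysis, taking care that it still goes through despite each increment $\L_t$ being non-convex. Write $\Phi_t(\w) := \frac{\eta}{2}\norm{\w}_2^2 + \sum_{s=1}^t \L_s(\w)$. By the telescoping identity $\sum_{s=1}^t \L_s(\w) = \ell_{\P}(\x_{1:t},y_{1:t},\w)$ we have $\Phi_t(\w) = \ell_{\P}(\x_{1:t},y_{1:t},\w) + \frac{\eta}{2}\norm{\w}_2^2$, so the \eqref{eq:ftrl} iterate is exactly $\w_{t+1} = \arg\min_{\w\in\W}\Phi_t(\w)$, and $\w_1 = \arg\min_{\w\in\W}\norm{\w}_2^2 = \arg\min_{\w\in\W}\Phi_0(\w)$.

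\textbf{Step 1 (Be-the-Leader).} Apply the classical ``follow-the-leader / be-the-leader'' inequality to the sequence $g_0 = \frac{\eta}{2}\norm{\cdot}_2^2$ and $g_t = \L_t$ for $t\ge 1$: a one-line induction on $T$ shows $\sum_{t=0}^T g_t(\w_{t+1}) \le \sum_{t=0}^T g_t(\w)$ for every $\w\in\W$. The point is that this argument uses only that $\w_{t+1}$ minimizes the running sum $\sum_{s=0}^t g_s$, and no convexity of the $g_t$ whatsoever. Rearranging, identifying $\sum_{t=1}^T \L_t(\w^\ast) = \ell_{\P}(\x_{1:T},y_{1:T},\w^\ast)$, and dropping the nonnegative $\frac{\eta}{2}\norm{\w_1}_2^2$ term gives
\[
\sum_{t=1}^T \L_t(\w_t) \ \le\ \ell_{\P}(\x_{1:T},y_{1:T},\w^\ast) + \frac{\eta}{2}\norm{\w^\ast}_2^2 + \sum_{t=1}^T\br{\L_t(\w_t) - \L_t(\w_{t+1})}.
\]

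\textbf{Step 2 (Stability of the iterates).} Here is where the structure is used: although $\L_t$ need not be convex, both $\Phi_{t-1}$ and $\Phi_t$ are $\eta$-strongly convex, since the cumulative loss $\sum_{s=1}^{t}\L_s = \ell_{\P}(\x_{1:t},y_{1:t},\cdot)$ is convex by hypothesis and $\frac{\eta}{2}\norm{\cdot}_2^2$ contributes $\eta$-strong convexity. Using that $\w_t$ minimizes $\Phi_{t-1}$ and $\w_{t+1}$ minimizes $\Phi_t$, strong convexity yields $\Phi_{t-1}(\w_{t+1}) \ge \Phi_{t-1}(\w_t) + \frac{\eta}{2}\norm{\w_{t+1}-\w_t}_2^2$ and $\Phi_t(\w_t) \ge \Phi_t(\w_{t+1}) + \frac{\eta}{2}\norm{\w_t-\w_{t+1}}_2^2$; adding these and substituting $\Phi_t = \Phi_{t-1} + \L_t$ collapses everything to $\L_t(\w_t) - \L_t(\w_{t+1}) \ge \eta\norm{\w_t-\w_{t+1}}_2^2$. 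Combined with the Lipschitz hypothesis $\L_t(\w_t) - \L_t(\w_{t+1}) \le G_t\norm{\w_t-\w_{t+1}}_2$, this forces $\norm{\w_t-\w_{t+1}}_2 \le G_t/\eta$ and hence $\L_t(\w_t) - \L_t(\w_{t+1}) \le G_t^2/\eta$.

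\textbf{Step 3 (Combine and tune $\eta$).} Plugging the per-step bound into Step 1 gives $\sum_{t=1}^T \L_t(\w_t) \le \ell_{\P}(\x_{1:T},y_{1:T},\w^\ast) + \frac{\eta}{2}\norm{\w^\ast}_2^2 + \frac{1}{\eta}\sum_{t=1}^T G_t^2$; dividing by $T$ and choosing $\eta = \sqrt{2\sum_{t=1}^T G_t^2}\,/\,\norm{\w^\ast}_2$ balances the last two terms and produces exactly the claimed inequality (if a comparator-independent step size is preferred, $\eta = \sqrt{2\sum_t G_t^2}$ gives the same bound whenever $\norm{\w^\ast}_2\le 1$). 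The analysis contains no genuine difficulty; the only place it departs from the textbook strongly-convex FTRL argument is the non-convexity of $\L_t$, so the one thing to verify carefully is precisely that (a) Be-the-Leader needs no convexity, and (b) the iterate-stability step only needs convexity of the cumulative loss $\ell_{\P}(\x_{1:t},\cdot)$ — which holds — rather than of the increments themselves.
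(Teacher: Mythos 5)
Your proposal is correct and follows essentially the same route as the paper's proof: the Be-the-Leader inequality applied with $\L_0(\w)=\frac{\eta}{2}\norm{\w}_2^2$ (the paper's ``forward regret'' lemma), iterate stability via $\eta$-strong convexity of the cumulative regularized objectives combined with the Lipschitz hypothesis to get $\L_t(\w_t)-\L_t(\w_{t+1})\leq G_t^2/\eta$, and the same tuning $\eta=\sqrt{2\sum_t G_t^2}/\norm{\w^\ast}_2$. Your explicit remarks on exactly where convexity is and is not needed match the paper's observations about handling the non-convex increments.
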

See Appendix~\ref{app:proof-online} for a proof. The above result requires the penalty function $\L_t$ to be Lipschitz continuous i.e. be ``stable'' w.r.t. $\w$. Establishing this for point losses such as hinge loss is relatively straightforward. However, the same becomes non-trivial for non-decomposable loss functions as $\L_t$ is now the difference of two loss functions, both of which involve $\Om t$ data points. A naive argument would thus, only be able to show $G_t \leq O(t)$ which would yield vacuous regret bounds.

Instead, we now show that for the surrogate loss functions for \preck and pAUC, this Lipschitz continuity property does indeed hold. Our proofs crucially use a structural lemma given below that shows that sorted lists of inner products are Lipschitz at each fixed position. 
\begin{lem}[Structural Lemma]
\label{lem:list}
  Let $\vecx_1, \ldots, \vecx_t$ be $t$ points with $\|\vecx_i\|_2\leq 1$ $\forall t$. Let $\vecw, \vecw'\in \W$ be any two vectors. Let $z_i=\ip{\vecw}{\vecx_i}-c_i$ and $z_i'=\ip{\vecw'}{\vecx_i}-c_i$, where $c_i\in \R$ are constants independent of $\w,\w'$. Also, let $\{i_1, \dots, i_t\}$ and $\{j_1, \dots, j_t\}$ be ordering of indices such that $z_{i_1}\geq z_{i_2} \geq \dots \geq z_{i_t}$ and $z_{j_1}'\geq z_{j_2}' \geq \dots \geq z_{j_t}'$.
Then for any $1$-Lipschitz increasing function $g:\R\rightarrow\R$ (i.e. $\abs{g(u)-g(v)} \leq \abs{u-v}$ and $u \leq v \Leftrightarrow g(u) \leq g(v)$), we have, $\forall k$ $|g(z_{i_k})-g(z_{j_k}')|\leq 3\|\vecw-\vecw'\|_2$.
\end{lem}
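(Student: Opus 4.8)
The plan is to strip the statement down to a purely one-dimensional fact about order statistics of perturbed sequences. First, since $g$ is $1$-Lipschitz, $\abs{g(z_{i_k})-g(z_{j_k}')}\le\abs{z_{i_k}-z_{j_k}'}$, so $g$ can be discarded and it suffices to bound $\abs{z_{i_k}-z_{j_k}'}$. Second, the centering constants cancel: $z_i-z_i'=\ip{\vecw-\vecw'}{\vecx_i}$, so by Cauchy--Schwarz and $\norm{\vecx_i}_2\le 1$ we get $\abs{z_i-z_i'}\le\norm{\vecw-\vecw'}_2=:\delta$ for every $i$. Hence the two sequences $(z_i)_{i=1}^t$ and $(z_i')_{i=1}^t$ are entrywise within $\delta$, and the whole lemma reduces to the claim that coordinatewise $\delta$-closeness implies $\delta$-closeness of the sorted versions at every rank.

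The crux is establishing this order-statistic stability, and the apparent obstacle is that the two sorting permutations $\bc{i_1,\dots,i_t}$ and $\bc{j_1,\dots,j_t}$ can be completely unrelated, so one cannot directly compare the entries sitting at index $i_k$ versus index $j_k$. The fix is a counting argument that never touches individual indices. Fix a rank $k$. Each of the indices $i_1,\dots,i_k$ satisfies $z_{i_m}\ge z_{i_k}$, hence $z_{i_m}'\ge z_{i_m}-\delta\ge z_{i_k}-\delta$; this produces $k$ distinct indices whose $z'$-value is at least $z_{i_k}-\delta$, which forces the $k$-th largest $z'$-value to obey $z_{j_k}'\ge z_{i_k}-\delta$. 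Swapping the roles of $z$ and $z'$ gives $z_{i_k}\ge z_{j_k}'-\delta$, so $\abs{z_{i_k}-z_{j_k}'}\le\delta$. Chaining the three steps yields $\abs{g(z_{i_k})-g(z_{j_k}')}\le\norm{\vecw-\vecw'}_2$, which is in fact stronger than the stated bound; the constant $3$ appears to be slack, most plausibly arising if one instead routes through the intermediate quantity $g(z_{i_k}')$ and then has to argue separately that index $i_k$ cannot be ranked far from position $k$ in the $z'$-ordering, which costs extra multiples of $\delta$.

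I do not expect a genuinely hard step here: once the reduction to order statistics is made, the counting argument is the whole content, and it is short. The only points to be careful about are (i) ties in the sorted orders, which the counting argument handles automatically since it never requires the orderings to be unique, and (ii) the role of the hypotheses on $g$ --- strict monotonicity of $g$ is not needed for the Lipschitz bound itself (it is relevant only for keeping the transformed list sorted when the lemma is applied to \preck and pAUC), so only the $1$-Lipschitz property of $g$ is used in the proof.
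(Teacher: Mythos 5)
Your proof is correct, and it takes a genuinely different route from the paper's. The paper proves the lemma by a four-way case analysis on the signs of $z_{i_k}-z_{j_k}$ and $z_{j_k}'-z_{i_k}'$: the easy cases chain $g(z_{i_k})\leq g(z'_{i_k})+\norm{\w-\w'}_2\leq g(z'_{j_k})+\norm{\w-\w'}_2$ (this is where monotonicity of $g$ is actually invoked), while the hard cases use a pigeonhole argument to produce an auxiliary index $s$ sandwiched between the two rankings and then add three inequalities, each costing $\norm{\w-\w'}_2$ --- which is exactly where the constant $3$ comes from, confirming your guess. Your argument instead reduces everything to the statement that the $k$-th order statistic is $1$-Lipschitz with respect to the $\ell_\infty$-norm on the vector of values, proved by the rank-counting observation that the $k$ indices $i_1,\dots,i_k$ already witness $k$ entries of $z'$ at level $z_{i_k}-\delta$, hence $z'_{j_k}\geq z_{i_k}-\delta$, and symmetrically. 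This buys three things over the paper's proof: a sharper constant ($1$ in place of $3$, which would propagate to slightly better Lipschitz constants in Lemma 5 and the downstream regret bounds), a uniform treatment with no case split or auxiliary pigeonhole index, and the observation that only the Lipschitz property of $g$ is used --- monotonicity is needed only where the lemma is applied, not for its conclusion. The two arguments are otherwise interchangeable in every invocation of the lemma in the paper.
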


See Appendix~\ref{app:proof-struct-lem} for a proof. Using this lemma we can show that the Lipschitz constant for $\ell_{\text{\preck}}$ is bounded by $G_t \leq 8$ which gives us a $\O{\sqrt{\frac{1}{T}}}$ regret bound for \preck (see Appendix~\ref{app:stab-preck} for the proof). In Appendix~\ref{sec:prbep-stab}, we show that the same technique can be used to prove a stability result for the structural SVM surrogate of the Precision-Recall Break Even Point (PRBEP) performance measure \cite{Joachims05} as well.
The case of pAUC is handled similarly. However, since pAUC discriminates between positives and negatives, our previous analysis cannot be applied directly. Nevertheless, we can obtain the following regret bound for pAUC (a proof will appear in the full version of the paper).
\begin{thm}
\label{thm:pauc}
Let $T_+$ and $T_-$ resp. be the number of positive and negative points in the stream and let $\vecw_{t+1}$, $0 \leq t \leq T-1$ be obtained using the FTRL algorithm \eqref{eq:ftrl}. Then we have
\[
\frac{1}{\beta T_+T_-}\sum_{t=1}^T \L_t(\vecw_t)\leq \min_{\w\in \W}\frac{1}{\beta T_+T_-}\ell_{\text{pAUC}}(\vecx_{1:T}, y_{1:T}, \vecw)+\O{\sqrt{\frac{1}{T_+}+\frac{1}{T_-}}}.
\]
\end{thm}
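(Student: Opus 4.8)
The plan is to invoke the FTRL regret bound of Theorem~\ref{thm:online} with $\ell_\P=\ell_{\text{pAUC}}$, so the whole job is to produce a good stability constant $G_t$ for the incremental penalty $\L_t$ of \eqref{eq:lt}; the $\beta T_+T_-$ normalization is exactly the number of positive--negative pairs that can contribute to $\ell_{\text{pAUC}}$, so heuristically we want $\L_t$ to drift by only $O(1)$ ``per active pair.'' The first step is to put the surrogate \eqref{eq:pauc_cvx} in order-statistic form: unfolding the definition of $\T^-_{\beta,t}$, for a prefix $\x_{1:t}$,
\[
\ell_{\text{pAUC}}(\x_{1:t},y_{1:t},\w)=\sum_{i:\,y_i>0}\ \sum_{j=1}^{\lceil\beta t_-\rceil} h\!\left(\x_i^\top\w-z^{(t)}_{(j)}(\w)\right),
\]
where $t_-$ is the number of negatives in $\x_{1:t}$ and $z^{(t)}_{(j)}(\w)$ is the $j$-th largest element of $\{\langle\w,\x_\ell\rangle:\ell\le t,\ y_\ell<0\}$.

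Next I would evaluate $\L_t$ in two cases. If $\x_t$ is positive, the set of negatives and $\lceil\beta t_-\rceil$ are unchanged, so only the outer index $i=t$ survives: $\L_t(\w)=\sum_{j=1}^{\lceil\beta t_-\rceil}h(\x_t^\top\w-z^{(t)}_{(j)}(\w))$. If $\x_t$ is the $m$-th negative, the set of positives is unchanged; inserting the single new score $\langle\w,\x_t\rangle$ into the sorted list of negative scores shifts the order statistics by one position only past the insertion point, so the length-$\lceil\beta m\rceil$ prefix sum telescopes down to a single boundary term per positive. Writing $k=\lceil\beta(m-1)\rceil$ and $z^{\mathrm{old}}_{(j)}$ for the order statistics of the $m-1$ old negative scores, one obtains $\L_t(\w)=\sum_{i:\,y_i>0}A_i(\w)$ with
\[
A_i(\w)=h\!\left(\x_i^\top\w-\max\!\bigl(\langle\w,\x_t\rangle,\,z^{\mathrm{old}}_{(k)}(\w)\bigr)\right)-h\!\left(\x_i^\top\w-z^{\mathrm{old}}_{(k)}(\w)\right)
\]
when $\lceil\beta m\rceil=k$, and $A_i(\w)=h(\x_i^\top\w-\max(\langle\w,\x_t\rangle,z^{\mathrm{old}}_{(k+1)}(\w)))$ when $\lceil\beta m\rceil=k+1$. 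The $\max$ is the device that merges, uniformly in $\w$, the two sub-cases of whether $\x_t$ does or does not fall into the top-$\beta$ fraction of negatives under $\w$.

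The stability bounds now drop out of the Structural Lemma (Lemma~\ref{lem:list}). Applied with the $1$-Lipschitz increasing identity map, it gives $|z^{\mathrm{old}}_{(k)}(\w)-z^{\mathrm{old}}_{(k)}(\w')|\le 3\|\w-\w'\|_2$ (and likewise for each $z^{(t)}_{(j)}$); $\langle\w,\x_t\rangle$ is $1$-Lipschitz since $\|\x_t\|_2\le1$; a max of Lipschitz functions is Lipschitz with the larger constant; and $h$ is $1$-Lipschitz. Composing, every summand above is $O(1)$-Lipschitz in $\w$, whence $G_t\le 4\lceil\beta t_-\rceil\le 4\lceil\beta T_-\rceil$ on positive rounds and $G_t\le 8T_+$ on negative rounds. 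Summing gives $\sum_{t}G_t^2\le 16\,T_+\lceil\beta T_-\rceil^2+64\,T_+^2T_-$, so $\sqrt{2\sum_t G_t^2}=\O{\sqrt{T_+}\,\beta T_-+T_+\sqrt{T_-}}$ (absorbing the ceiling for $\beta T_-\ge1$). Taking $\w^\ast\in\arg\min_{\w\in\W}\ell_{\text{pAUC}}(\x_{1:T},y_{1:T},\w)$ in Theorem~\ref{thm:online} and dividing both sides by $\beta T_+T_-$ turns the additive regret term into $\|\w^\ast\|_2\cdot\O{1/\sqrt{T_+}+1/(\beta\sqrt{T_-})}=\O{\sqrt{1/T_++1/T_-}}$ (treating $\beta$ and the radius of $\W$ as constants), which is the claimed inequality.

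The main obstacle is the negative-round stability estimate: naively each of the $\Omega(t)$ hinge terms that formally change when $t_-$ increments would cost one unit of Lipschitz constant, giving $G_t=\Omega(\beta T_+T_-)$ and a vacuous bound. What rescues it is the combination of (i) the staircase structure of sorted-list insertion, which collapses the prefix sum to one term per positive, (ii) the $\max$-reformulation unifying the two placements of the new negative, and (iii) the Structural Lemma, which is precisely what lets us control the surviving order statistic $z^{\mathrm{old}}_{(k)}(\w)$ --- a genuinely non-smooth function of $\w$ --- Lipschitz-continuously. The remainder is bookkeeping of constants.
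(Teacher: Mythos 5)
Your argument is correct and follows exactly the route the paper intends: the paper defers the pAUC proof to its full version, but its Appendix~\ref{app:stab-preck} stability proof for \preck is the template, and your case split (positive round: one new row of $\lceil\beta t_-\rceil$ hinge terms, each $4$-Lipschitz via Lemma~\ref{lem:list}; negative round: the sorted-insertion telescoping leaves one $O(1)$-Lipschitz boundary term per positive, with the $\max$ handling the two placements of the new negative) is the natural pAUC analogue, yielding $G_t\leq 4\lceil\beta T_-\rceil$ and $G_t\leq 8T_+$ and hence the claimed $\O{\sqrt{1/T_+ + 1/T_-}}$ rate after normalizing Theorem~\ref{thm:online} by $\beta T_+T_-$. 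The constant bookkeeping checks out, so there is nothing to correct.
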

Notice that the above regret bound depends on both $T_+$ and $T_-$ and the regret becomes large even if one of them is small. This is actually quite intuitive because if, say $T_+=1$ and $T_-=T-1$, an adversary may wish to provide the lone positive point in the last round. Naturally the algorithm, having only seen negatives till now, would not be able to perform well and would incur a large error.

\subsection{Online-to-batch Conversion}
To present our bounds we generalize our framework slightly: we now consider the stream of $T$ points to be composed of $T/s$ batches $\Zb_1,\ldots,\Zb_{T/s}$ of size $s$ each. Thus, the instantaneous penalty is now defined as $\L_t(\w) = \ell_\P(\Zb_1,\ldots,\Zb_t,\w) - \ell_\P(\Zb_1,\ldots,\Zb_{t-1},\w)$ for $t = 1 \ldots T/s$ and the regret becomes $R(T,s) = \frac{1}{T}\sum_{t=1}^{T/s}\L_t(\w_t) - \mathop{\arg\min}_{\w\in \W}\frac{1}{T}\ell_\P(\vecx_{1:T}, y_{1:T}, \w)$. Let $\RR_{\P}$ denote the population risk for the (normalized) performance measure $\P$. Then we have:
\begin{thm}
\label{thm:otb-main}
Suppose the sequence of points $(\x_t,y_t)$ is generated i.i.d. and let $\w_1,\w_2,\ldots,\w_{T/s}$ be an ensemble of models generated by an online learning algorithm upon receiving these $T/s$ batches. Suppose the online learning algorithm has a guaranteed regret bound $R(T,s)$. Then for $\overline{\w} = \frac{1}{T/s} \sum_{t=1}^{T/s} \w_t$, any $\w^\ast \in \W$, $\epsilon \in (0, 0.5]$ and $\delta > 0$, with probability at least $1 - \delta$,
\[
\RR_\P(\overline\w) \,\leq\, (1+\epsilon)\RR_\P(\w^\ast) \,+\, R(T,s)  \,+\, e^{-\Om{s\epsilon^2}} \,+\,  \softO{\sqrt{\frac{s\ln(1/\delta)}{T}}}.
\]
In particular, setting $s =\tilde{\cal O}(\sqrt T)$ and $\epsilon = \sqrt[4]{\rfrac{1}{T}}$ gives us, with probability at least $1 - \delta$,
\[
\RR_\P(\overline\w) \,\leq\, \RR_\P(\w^\ast) \,+\, R(T,\sqrt{T}) \,+\, \softO{\sqrt[4]{\frac{\ln(1/\delta)}{T}}}.
\]
\end{thm}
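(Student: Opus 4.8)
The plan is to combine the given regret bound with a uniform-convergence argument that controls the gap between the empirical performance measure $\ell_\P$ evaluated on a batch of size $s$ and the population risk $\RR_\P$. Concretely, I would proceed in the following steps.

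First, I would relate the average of the per-batch empirical losses to the population risk. For each batch $\Zb_t$ of size $s$, the normalized structural surrogate $\frac{1}{s}\ell_\P(\Zb_t,\w)$ should concentrate around $\RR_\P(\w)$; the key technical input is a uniform-convergence statement of the form $\sup_{\w\in\W}\big|\frac{1}{s}\ell_\P(\Zb_t,\w) - \RR_\P(\w)\big| \leq e^{-\Om{s\epsilon^2}} + \softO{\sqrt{s\ln(1/\delta)/T}}$ with high probability, which is exactly the kind of bound the paper promises to establish for this loss family (and which the structural Lemma~\ref{lem:list} is used to prove). Here the multiplicative $(1+\epsilon)$ slack and the $e^{-\Om{s\epsilon^2}}$ term arise because the performance measure is non-decomposable: one cannot get a purely additive two-sided bound, but rather a one-sided $(1+\epsilon)$-approximation whose failure probability decays exponentially in $s\epsilon^2$. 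I would invoke this for $\overline\w$ (via Jensen/convexity of $\RR_\P$, since $\overline\w$ is an average of the $\w_t$) on one side and for the fixed comparator $\w^\ast$ on the other.

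Second, I would chain the inequalities. By convexity, $\RR_\P(\overline\w) \leq \frac{1}{T/s}\sum_t \RR_\P(\w_t)$; actually it is cleaner to bound $\RR_\P(\overline\w)$ directly by $\frac1T\sum_t \ell_\P(\Zb_1,\ldots,\Zb_t,\w_t)$-type quantities. The regret bound $R(T,s)$ lets me replace $\frac1T\sum_{t=1}^{T/s}\L_t(\w_t)$ by $\frac1T\ell_\P(\x_{1:T},y_{1:T},\w^\ast) + R(T,s)$, and then a single uniform-convergence application on the full sample of size $T$ — or on each batch — converts $\frac1T\ell_\P(\x_{1:T},y_{1:T},\w^\ast)$ into $(1+\epsilon)\RR_\P(\w^\ast)$ up to the stated error terms. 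Collecting the error contributions — $R(T,s)$ from regret, $e^{-\Om{s\epsilon^2}}$ and $\softO{\sqrt{s\ln(1/\delta)/T}}$ from the two concentration steps, combined by a union bound over a constant number of events — yields the first displayed inequality. The second display is then pure bookkeeping: plugging $s = \softO{\sqrt T}$ makes $\sqrt{s\ln(1/\delta)/T} = \softO{\sqrt[4]{\ln(1/\delta)/T}}$, and $\epsilon = \sqrt[4]{1/T}$ makes $s\epsilon^2 = \softO{\sqrt T \cdot 1/\sqrt T} = \softO 1$—so I would need $s$ slightly superlinear in $\sqrt T$ (the tilde hiding a logarithmic factor) to push $e^{-\Om{s\epsilon^2}}$ below $\softO{\sqrt[4]{1/T}}$, and $\epsilon\RR_\P(\w^\ast) \leq \epsilon = \sqrt[4]{1/T}$ is absorbed into the last term.

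The main obstacle is the uniform-convergence step: establishing that $\frac1s\ell_\P(\Zb,\w)$ is uniformly close (in the $(1+\epsilon)$-multiplicative sense, with $e^{-\Om{s\epsilon^2}}$ failure probability) to $\RR_\P(\w)$ over all $\w\in\W$. The difficulty is precisely non-decomposability — $\ell_\P(\Zb,\w)$ is a max over exponentially many $\bar\vecy$ and depends on the sorted order of the scores $\ip{\w}{\x_i}$, so a naive covering-number argument on $\W$ does not immediately control the fluctuation of the sorted structure. This is where Lemma~\ref{lem:list} is essential: it shows each order statistic $g(z_{i_k})$ is $3\|\w-\w'\|_2$-Lipschitz, which lets me build an $\epsilon$-net over $\W$ and transfer a pointwise concentration bound (for fixed $\w$, $\frac1s\ell_\P(\Zb,\w)$ concentrates by a bounded-differences / McDiarmid argument, giving the $e^{-\Om{s\epsilon^2}}$ tail) to a uniform bound, with the net's cardinality contributing the $\softO{\cdot}$ and $\ln(1/\delta)$ factors. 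I would also need to handle the mild subtlety that the batch size $s$ and total length $T$ both appear, so the net has to be fine enough that the union bound over $T/s$ batches costs only logarithmically. Once the uniform-convergence lemma is in hand, the rest of the proof is the routine three-line chaining described above.
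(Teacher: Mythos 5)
Your outline has the right outer shape (regret term $+$ concentration $+$ union bound, and the final bookkeeping with $s=\softO{\sqrt T}$, $\epsilon=\sqrt[4]{1/T}$ is correct), and you correctly identify where Lemma~\ref{lem:list} and a covering argument enter for the uniform-convergence step applied to the fixed comparator $\w^\ast$. But there is a genuine gap at the heart of the argument: you treat the learner's side of the regret, $\frac1T\sum_t\L_t(\w_t)$, as if it were $\frac1T\sum_t\ell_\P(\Zb_t,\w_t)$, i.e.\ a sum of per-batch losses each concentrating around $\RR_\P(\w_t)$. For a non-decomposable loss this identification fails: $\L_t(\w)=\ell_\P(\Zb_1,\ldots,\Zb_t,\w)-\ell_\P(\Zb_1,\ldots,\Zb_{t-1},\w)$ is the \emph{marginal increment} of the cumulative loss and depends on the entire history $\Zb_{1:t-1}$ (for pAUC, whether a new negative lands in the top-$\beta$ fraction is determined by all previously seen negatives). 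The telescoping $\sum_t\L_t(\w)=\ell_\P(\x_{1:T},y_{1:T},\w)$ only holds for a single fixed $\w$, not for the changing sequence $\w_t$, so a per-batch uniform-convergence bound does not directly control $\sum_t\L_t(\w_t)$.

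The paper's proof handles exactly this point with two ingredients you are missing. First, it defines the conditional expectation $\exxL_t(\w)=\Expp{\Zb_t}{\L_t(\w)\mid\Zb_{1:t-1}}$ and observes that $\exxL_t(\w_t)-\L_t(\w_t)$ is a martingale difference sequence (since $\w_t$ is $\Zb_{1:t-1}$-measurable), which Azuma--Hoeffding controls at the rate $\O{\sqrt{sT\ln(1/\delta)}}$ --- this replaces your ``uniform convergence on each batch'' step and is what makes the data-dependence of $\w_t$ harmless. Second, and this is where the $(1+\epsilon)$ and $e^{-\Om{s\epsilon^2}}$ terms actually originate (not from a one-sided concentration bound as you suggest), one must relate $\exxL_t(\w_t)$ to the population risk $\RR_\P(\w_t)$: the increment is computed with the \emph{empirical} top-$\beta$ indicator $\T^-_{\beta,t-1}$ rather than the population one, so the paper introduces the quantities $\beta_t$ and $\Q_t(\w)$, shows $\beta_t\geq\beta-\softO{\sqrt{\log(1/\delta)/(s(t-1))}}$, and proves $\exxL_t(\w_t)\geq(1-\epsilon)s\beta_t\Q_t(\w_t)-\Om{s\exp(-s\beta_t^2\epsilon^2)}$. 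Without this ``residual error'' analysis your chain of inequalities cannot be closed; with it, the rest of your argument (regret term, uniform convergence at $\w^\ast$, Jensen for $\overline\w$, parameter setting) goes through as you describe.
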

We conclude by noting that for \preck and pAUC, $R(T,\sqrt{T}) \leq \O{\sqrt[4]{\rfrac{1}{T}}}$ (see Appendix~\ref{app:otb}).


\section{Stochastic Gradient Methods for Non-decomposable Losses}
\label{sec:stochastic}
The online learning algorithms discussed in the previous section present attractive guarantees in the sequential prediction model but are required to solve batch problems at each stage. This rapidly becomes infeasible for large scale data. To remedy this, we now present memory efficient stochastic gradient descent methods for batch learning with non-decomposable loss functions. The motivation for our approach comes from mini-batch methods used to make learning methods for \emph{point} loss functions amenable to distributed computing environments \cite{DekelG-BSX12,ZhangDW13}, we exploit these techniques to offer scalable algorithms for \emph{non-decomposable} loss functions.

\subsection{Single-pass Method with Mini-batches}
The method assumes access to a limited memory buffer and takes a pass over the data stream. The stream is partitioned into \emph{epochs}. In each epoch, the method accumulates points in the stream, uses them to form gradient estimates and takes descent steps. The buffer is flushed after each epoch. Algorithm~\ref{algo:spmb} describes the \spmb method. Gradient computations can be done using Danskin's theorem (see Appendix~\ref{app:exps-pauc}).

\begin{figure*}[t]
\hspace{-2.5ex}
\begin{minipage}[t]{0.5\linewidth}
\begin{algorithm}[H]
	\caption{\small \spmb: Single-Pass with Mini-batches}
	\label{algo:spmb}
	\begin{algorithmic}[1]
		\small{
			\REQUIRE Step length scale $\eta$, Buffer $B$ of size $s$
			\ENSURE A good predictor $\vecw \in \W$
			\STATE $\vecw_0 \leftarrow \veczero$, $B \leftarrow \phi$, $e \leftarrow 0$
			\WHILE{stream not exhausted}
				\STATE Collect $s$ data points $(\x^e_1,y^e_1),\ldots,(\x^e_s,y^e_s)$ in buffer $B$
				\STATE Set step length $\eta_e \leftarrow \frac{\eta}{\sqrt e}$
				\STATE $\vecw_{e+1} \leftarrow \Pi_\W\bs{\vecw_e + {\eta_e\nabla_\vecw\ell_\P(\x^e_{1:s},y^e_{1:s},\vecw_e)}}$\newline
				\mbox{}\hfill\COMMENT{$\Pi_\W$ projects onto the set $\W$}
				\STATE Flush buffer $B$
				\STATE $e \leftarrow e + 1$\COMMENT{start a new epoch}
			\ENDWHILE
			\STATE \textbf{return} {$\overline\vecw = \frac{1}{e}\sum_{i=1}^e\w_i$}
		}
	\end{algorithmic}
\end{algorithm}
\end{minipage}
\hspace{1ex}
\begin{minipage}[t]{0.52\linewidth}
\begin{algorithm}[H]
	\caption{\small \tpmb: Two-Passes with Mini-batches}
	\label{algo:tpmb}
	\begin{algorithmic}[1]
		\small{
			\REQUIRE Step length scale $\eta$, Buffers $B_+$, $B_-$ of size $s$
			\ENSURE A good predictor $\vecw \in \W$\newline
			\textbf{Pass 1:} $B_+ \leftarrow \phi$
			\STATE Collect random sample of pos. $\x^+_1,\ldots,\x^+_s$ in $B_+$\newline
			\textbf{Pass 2:} $\vecw_0 \leftarrow \veczero$, $B_- \leftarrow \phi$, $e \leftarrow 0$
			\WHILE{stream of negative points not exhausted}
				\STATE Collect $s$ negative points $\x^{e-}_1,\ldots,\x^{e-}_s$ in $B_-$
				\STATE Set step length $\eta_e \leftarrow \frac{\eta}{\sqrt e}$
				\STATE $\vecw_{e+1} \leftarrow \Pi_\W\bs{\vecw_e + {\eta_e\nabla_\vecw\ell_\P(\x^{e-}_{1:s},\x^+_{1:s},\vecw_e)}}$
				\STATE Flush buffer $B_-$
				\STATE $e \leftarrow e + 1$\COMMENT{start a new epoch}
			\ENDWHILE	
			\STATE \textbf{return} {$\overline\vecw = \frac{1}{e}\sum_{i=1}^e\w_i$}
		}
	\end{algorithmic}
\end{algorithm}
\end{minipage}
\end{figure*}

\subsection{Two-pass Method with Mini-batches}
The previous algorithm is unable to exploit relationships between data points across epochs which may help improve performance for loss functions such as pAUC. To remedy this, we observe that several real life learning scenarios exhibit mild to severe label imbalance (see Table~\ref{tab:dataset-stats} in Appendix~\ref{app:exps-pauc}) which makes it possible to store all or a large fraction of points of the rare label. Our two pass method exploits this by utilizing two passes over the data: the first pass collects all (or a random subset of) points of the rare label using some stream sampling technique \cite{KarSJK13}. The second pass then goes over the stream, restricted to the non-rare label points, and performs gradient updates. See Algorithm~\ref{algo:tpmb} for details of the \tpmb method.

\subsection{Error Bounds}
Given a set of $n$ labeled data points $(\x_i,y_i), i = 1 \ldots n$ and a performance measure $\P$, our goal is to approximate the empirical risk minimizer $\w^\ast = \underset{\w \in \W}{\arg\min}\ \ell_\P(\x_{1:n},y_{1:n},\w)$ as closely as possible. In this section we shall show that our methods \spmb and \tpmb provably converge to the empirical risk minimizer. We first introduce the notion of uniform convergence for a performance measure.
\begin{defn}
\label{def:uc}
We say that a loss function $\ell$ demonstrates uniform convergence with respect to a set of predictors $\W$ if for some $\alpha(s,\delta) = \text{poly}\br{\frac{1}{s},\log\frac{1}{\delta}}$, when given a set of $s$ points $\bx_1,\ldots,\bx_s$ chosen randomly from an arbitrary set of $n$ points $\bc{(\x_1,y_1),\ldots,(\x_n,y_n)}$ then w.p. at least $1 - \delta$, we have
\[
\underset{\w\in\W}{\sup}\abs{\ell_\P(\x_{1:n},y_{1:n},\w)-\ell_\P(\bx_{1:s},\by_{1:s},\w)} \leq \alpha(s,\delta).
\]
\end{defn}
Such uniform convergence results are fairly common for decomposable loss functions such as the squared loss, logistic loss etc. However, the same is not true for non-decomposable loss functions barring a few exceptions \cite{u-stat-rank,YeCLC12}. To bridge this gap, below we show that a large family of surrogate loss functions for popular non decomposable performance measures does indeed exhibit uniform convergence. Our proofs require novel techniques and do not follow from traditional proof progressions. However, we first show how we can use these results to arrive at an error bound.

\begin{thm}
\label{thm:erm}
Suppose the loss function $\ell$ is convex and demonstrates $\alpha(s,\delta)$-uniform convergence. Also suppose we have an arbitrary set of $n$ points which are randomly ordered, then the predictor $\overline\w$ returned by \spmb with buffer size $s$ satisfies w.p. $1 - \delta$,
\[
\ell_\P(\x_{1:n},y_{1:n},\overline\w) \leq \ell_\P(\x_{1:n},y_{1:n},\w_\ast) + 2\alpha\br{s,\frac{s\delta}{n}} + \O{\sqrt{\frac{s}{n}}}
\]
\end{thm}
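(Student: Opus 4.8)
The plan is to read the \spmb iteration as projected online (sub)gradient descent on the sequence of mini-batch losses, and then to sandwich the empirical risk between these mini-batch losses using the assumed uniform convergence. Assume for simplicity that $s$ divides $n$, write $E = n/s$ for the number of epochs, and for epoch $e$ let $f_e(\w) := \ell_\P(\x^e_{1:s},y^e_{1:s},\w)$ be the loss restricted to the $e$-th buffer; since $\ell_\P$ is convex each $f_e$ is convex, and the update in Algorithm~\ref{algo:spmb} is exactly a projected (sub)gradient step for $f_e$ with step length $\eta_e = \eta/\sqrt e$, producing iterates $\w_1,\ldots,\w_E$ (up to routine index bookkeeping). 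Because the $n$ points are in random order, the $e$-th buffer is marginally a uniformly random size-$s$ subset of the $n$ points, so Definition~\ref{def:uc} applies to it with confidence parameter $\delta s / n$; a union bound over the $E$ epochs (which needs only marginal uniformity, not independence across buffers) shows that with probability at least $1 - \delta$ the event
\[
\mathcal{G}:\quad \sup_{\w\in\W}\,\abs{f_e(\w) - \ell_\P(\x_{1:n},y_{1:n},\w)} \;\le\; \alpha\!\br{s,\tfrac{s\delta}{n}} \quad\text{holds for every } e
\]
occurs; we condition on $\mathcal{G}$ henceforth.

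Next I would invoke the standard regret analysis for projected (sub)gradient descent with step sizes $\eta_e = \Theta(1/\sqrt e)$. This applies to \emph{any} sequence of convex functions $f_1,\ldots,f_E$ (even adaptively chosen), hence deterministically for whatever ordering occurred, and since the empirical risk minimizer $\w_\ast$ depends only on the fixed point set and not on the ordering it is a legitimate fixed comparator: we get $\sum_{e=1}^E f_e(\w_e) - \sum_{e=1}^E f_e(\w_\ast) \le \O{\sqrt E}$, with the hidden constant depending only on $\mathrm{diam}(\W)$ and on a uniform bound $G$ for the subgradient norms $\norm{\nabla f_e(\w)}$. Dividing by $E = n/s$ turns this into $\frac1E\sum_e f_e(\w_e) \le \frac1E\sum_e f_e(\w_\ast) + \O{\sqrt{s/n}}$. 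The delicate point here is the bound $G$: a naive estimate scales with the buffer size $s$ and destroys the rate, but for the normalized structural surrogates under consideration $G = \O{1}$ — this is precisely the kind of control over sorted-list quantities supplied by the Structural Lemma (Lemma~\ref{lem:list}) and the counting arguments behind it, and it is the one place where non-decomposability has to be handled with care rather than by a crude bound.

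Finally I would chain the pieces. Using convexity of $\ell_\P$ with $\overline\w = \frac1E\sum_e \w_e$ (Jensen), then $\mathcal{G}$, then the regret bound, then $\mathcal{G}$ once more, and writing $\alpha := \alpha(s,s\delta/n)$:
\begin{align*}
\ell_\P(\x_{1:n},y_{1:n},\overline\w)
&\le \tfrac1E\!\sum_e \ell_\P(\x_{1:n},y_{1:n},\w_e) \le \tfrac1E\!\sum_e f_e(\w_e) + \alpha \\
&\le \tfrac1E\!\sum_e f_e(\w_\ast) + \alpha + \O{\sqrt{s/n}} \le \ell_\P(\x_{1:n},y_{1:n},\w_\ast) + 2\alpha + \O{\sqrt{s/n}},
\end{align*}
which is the claimed bound. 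Thus the only real obstacle is the bounded-subgradient estimate of Step~2; the rest is a union bound, Jensen's inequality, and off-the-shelf online-gradient bookkeeping.
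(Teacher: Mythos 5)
Your proposal is correct and follows essentially the same route as the paper's proof: interpret \spmb as online (sub)gradient descent on the per-epoch losses $\L_e(\w)=\ell_\P(\x^e_{1:s},y^e_{1:s},\w)$, apply the standard $\O{\sqrt{n/s}}$ regret bound, then use per-epoch uniform convergence at confidence $s\delta/n$ (valid for sampling without replacement), a union bound over the $n/s$ epochs, and Jensen's inequality. Your explicit remark that the subgradient norms must be $\O 1$ rather than $\O s$ is a fair point the paper subsumes under "mild boundedness assumptions," but it does not change the argument.
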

We would like to stress that the above result does not assume i.i.d. data and works for arbitrary datasets so long as they are randomly ordered. We can show similar guarantees for the two pass method as well (see Appendix~\ref{app:proof-genbound-pauc}). Using regularized formulations, we can also exploit logarithmic regret guarantees \cite{log-regret}, offered by online gradient descent, to improve this result - however we do not explore those considerations here. Instead, we now look at specific instances of loss functions that posses the desired uniform convergence properties. As mentioned before, due to the combinatorial nature of these performance measures, our proofs do not follow from traditional methods.

\begin{thm}[Partial Area under the ROC Curve]
\label{thm:gen-bound-rank}
For any convex, monotone, Lipschitz, classification surrogate $\phi : \R \rightarrow \R_+$, the surrogate loss function for the $(0,\beta)$-partial AUC performance measure defined as follows exhibits uniform convergence at the rate $\alpha(s,\delta) = \O{\sqrt{{\log(1/\delta)}/{s}}}$:
\[
\frac{1}{\lceil\beta n_-\rceil n_+}\sum_{i:y_i>0}\sum_{j:y_j<0}\mathbb{T}^-_{\beta, t}(\vecx_j, \vecw)\cdot\phi(\vecx_i^\top\vecw- \vecx_j^\top\vecw)
\]
\end{thm}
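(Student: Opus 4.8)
The plan is to establish the uniform convergence bound by (a) rewriting the partial-AUC surrogate so that the data-dependent ``top $\lceil\beta n_-\rceil$ negatives'' selection becomes a minimization over a single scalar, (b) reducing uniform convergence over $\W$ to a finite union bound using the structural Lemma~\ref{lem:list}, and (c) controlling the remaining fixed-parameter deviations by concentration for sampling without replacement. Throughout I assume (as is standard, and as is needed for the loss to be bounded) that $\W$ has bounded radius $R$; since $\norm{\x_i}_2\le 1$ and $\phi$ is Lipschitz, every term $\phi\br{\ip{\vecw}{\x_i}-\ip{\vecw}{\x_j}}$ lies in an interval $[0,B]$ with $B=\O{1+L_\phi R}$. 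For the reformulation, fix $\vecw$ and set $b_j(\vecw)=\frac1{n_+}\sum_{i:y_i>0}\phi\br{\ip{\vecw}{\x_i}-\ip{\vecw}{\x_j}}$ for each negative $j$. Because $\phi$ is monotone, $b_j(\vecw)$ is monotone in the score $\ip{\vecw}{\x_j}$, so the negatives picked out by $\mathbb{T}^-_{\beta,n}$ are exactly those with the $\lceil\beta n_-\rceil$ largest values of $b_j(\vecw)$, and hence the loss equals the average of the $\lceil\beta n_-\rceil$ largest $b_j(\vecw)$'s. By the standard variational identity for the sum of the $k$ largest elements (Rockafellar--Uryasev), this is $\min_{\theta\in[0,B]}\br{\theta+\frac{1}{\lceil\beta n_-\rceil}\sum_{j:y_j<0}(b_j(\vecw)-\theta)_+}$, and the analogous identity holds for any subsample with $n$, $n_\pm$ replaced by the subsample size and counts. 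Since $\abs{\min_\theta F_1(\theta)-\min_\theta F_2(\theta)}\le\sup_\theta\abs{F_1(\theta)-F_2(\theta)}$, it now suffices to bound the supremum over $\vecw\in\W$ and $\theta\in[0,B]$ of the difference between the full-data inner objective and its subsample counterpart.

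\textbf{Net over $\W$ via Lemma~\ref{lem:list}.} The inner objective is $1$-Lipschitz in $\theta$, so an $\epsilon$-net of $[0,B]$ costs only a $\log(1/\epsilon)$ factor. For the $\vecw$-dependence I apply Lemma~\ref{lem:list} with the points $\x_i-\x_j$ (norm at most $2$) and the increasing $1$-Lipschitz map $c\mapsto\phi(-c)/L_\phi$: it shows that $\phi$ evaluated at the $r$-th ranked negative is $\O{L_\phi}$-Lipschitz in $\vecw$, uniformly in the number of points; summing, $b_j(\vecw)$ and the whole inner objective are $\O{L_\phi}$-Lipschitz in $\vecw$. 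An $\epsilon$-net of $\W$ then contributes a $d\log(R/\epsilon)$ factor, which I absorb into the $\O{\cdot}$ (a Rademacher / $U$-process argument built on the same Lipschitz bound would remove the dimension dependence, but I will not pursue that). What remains is a fixed-$(\vecw,\theta)$ deviation bound of the form $\O{\sqrt{\log(1/\delta)/s}}$.

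\textbf{Pointwise concentration.} Condition on the sampled class counts $s_+,s_-$; conditionally the sample is an independent pair of uniform subsets, one of the positives and one of the negatives, and $\Pr{s_+\le n_+s/(2n)}$ and $\Pr{s_-\le n_-s/(2n)}$ are $e^{-\Om{s}}$ by a binomial tail. Given the counts, the subsample inner objective is a bounded function of the two random subsets: swapping one sampled negative moves each order statistic by at most one position and therefore perturbs the objective by $\O{B/s_-}$, while swapping one sampled positive perturbs it by $\O{B/s_+}$, so McDiarmid's inequality for sampling without replacement gives deviation $\O{B\sqrt{(1/s_++1/s_-)\log(1/\delta)}}$ around the conditional mean. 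The conditional mean itself differs from the full-data inner objective only through (i) the normalization mismatch --- writing both $\frac1{\lceil\beta n_-\rceil}\sum_{j}$ and $\frac1{\lceil\beta s_-\rceil}\sum_{j}$ as a scalar factor times an average over the respective negative set, the factors differ by $\O{1/(\beta^2 s_-)}$ --- and (ii) the inner average over positives sitting inside $(\,\cdot\,)_+$, for which convexity of $(\,\cdot\,)_+$ (Jensen, one direction) and its $1$-Lipschitzness, together with the variance of a positive-subsample average, give an $\O{B/\sqrt{s_+}}$ gap per negative. This last point is exactly where the variational form of Step~1 pays off: the quantity being averaged is now a genuine nested average rather than a top-$k$ operator that fails to commute with expectation.

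\textbf{Conclusion and main obstacle.} Taking a union bound over the $\vecw$- and $\theta$-nets, choosing $\epsilon=1/s$, and collecting terms yields $\sup_{\vecw\in\W}\abs{\ell_\P(\x_{1:n},y_{1:n},\vecw)-\ell_\P(\bx_{1:s},\by_{1:s},\vecw)}=\O{\sqrt{\log(1/\delta)/s}}$, provided the class proportions $n_\pm/n$ and the level $\beta$ are treated as constants. I expect the main obstacle to be the pointwise step: getting the bounded-difference constants right (the ``a single swap shifts each order statistic by one position'' fact, which is the same structural phenomenon underlying Lemma~\ref{lem:list}, is what prevents these from being vacuous), and pinning down bias term (ii) so that the per-negative $\O{B/\sqrt{s_+}}$ errors do not accumulate when averaged over negatives --- both of which become routine once everything is expressed in the variational form.
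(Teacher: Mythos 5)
Your proposal is correct, but it routes around the paper's central difficulty in a genuinely different way. The paper's argument (Appendix G) keeps the top-$\lceil\beta n_-\rceil$ selection intact: it decomposes the loss per positive point into inner sums $\ell^+(\x_i,\w)$ over the selected negatives, reduces uniform convergence to (i) a pointwise comparison $\abs{\ell^+(\x_i,\w)-\ell^+_{S_-}(\x_i,\w)}\leq\softO{1/\sqrt s}$ between the top-$\beta$-fraction average over all negatives and over the subsample, (ii) a Hoeffding step over the positive subsample, and (iii) an $\epsilon$-net over $\W$ whose Lipschitz constant is supplied by Lemma~\ref{lem:list} applied to the sorted negative scores with $g(a)=\phi(\w^\top\x_i-a)$ --- and then defers the proof of the hardest piece (i) to the full version. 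You instead aggregate over positives first into $b_j(\w)$, observe that monotonicity of $\phi$ makes the selected negatives exactly the top-$\lceil\beta n_-\rceil$ by $b_j$, and apply the Rockafellar--Uryasev identity to replace the top-$k$ average by $\min_{\theta\in[0,B]}\bigl(\theta+\frac1k\sum_j(b_j(\w)-\theta)_+\bigr)$. This is the substantive difference and it buys you the pointwise step essentially for free: the inner objective is a plain empirical average of bounded, $1$-Lipschitz transforms, so bounded-difference concentration for sampling without replacement applies directly, and the only bias terms are the normalization mismatch and the Jensen gap $\O{B/\sqrt{s_+}}$ from the positive average sitting inside $(\cdot)_+$, both of which you control correctly. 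Two small remarks: in the variational form there are no order statistics left, so your aside about ``a single swap shifts each order statistic by one position'' is a vestige of the unreformulated objective, and Lemma~\ref{lem:list} is not actually needed for your $\epsilon$-net step (plain Lipschitzness of $\phi$ and $(\cdot)_+$ suffices) --- the lemma is only essential on the paper's route, where the sorted list survives into the Lipschitz argument. Both your net over $\W$ and the paper's covering-number step hide a dimension factor that the stated $\O{\sqrt{\log(1/\delta)/s}}$ rate suppresses, so you are no worse off than the original on that score, and your note that a Rademacher-complexity argument would remove it is apt.
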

See Appendix~\ref{app:uc-proof-pauc} for a proof sketch. This result covers a large family of surrogate loss functions such as hinge loss \eqref{eq:pauc_cvx}, logistic loss etc. Note that the insistence on including only top ranked negative points introduces a high degree of non-decomposability into the loss function. A similar result for the special case $\beta=1$ is due to \cite{u-stat-rank}. We extend the same to the more challenging case of $\beta < 1$.

\begin{thm}[Structural SVM loss for \preck]
\label{thm:gen-bound-struct}
The structural SVM surrogate for the \preck performance measure (see \eqref{eq:prec}) exhibits uniform convergence at the rate $\alpha(s,\delta) = \O{\sqrt{{\log(1/\delta)}/{s}}}$.
\end{thm}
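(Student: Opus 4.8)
The plan is to strip the (normalized) structural SVM surrogate for \preck of its combinatorial content and reduce it to a bounded scalar minimization over otherwise \emph{decomposable} objects, after which standard uniform convergence machinery applies. Fix $\w\in\W$ and write $z_i = \x_i^\top\w$, $a_i = z_i - y_i$, and $m = \lceil k t\rceil$. Expanding the constraint in \eqref{eq:prec} (exactly $m$ of the $\bar y_i$ equal $+1$) and rearranging, one obtains the identity
\[
\ell_{\text{\preck}}(\x_{1:t},y_{1:t},\w) \;=\; -\sum_{i=1}^t (1+y_i)\,z_i \;+\; \sum_{i=1}^t y_i \;+\; 2\max_{\,P\subseteq[t],\,|P|=m}\ \sum_{i\in P} a_i .
\]
Dividing by $t$ (the normalization under which the stated rate is meant to hold; a different constant normalization changes nothing below), the first two terms form an affine functional of $\w$ plus a $\w$-independent constant; using $\|\x_i\|_2\le 1$ and (as is standard, and already implicit in Theorem~\ref{thm:online}) $\sup_{\w\in\W}\|\w\|_2\le W$, their value on the full $n$-point set and on a uniformly random $s$-subsample differ, uniformly over $\w$, by $\O{W\sqrt{\log(1/\delta)/s}}$: this is just vector Hoeffding together with Cauchy--Schwarz (for sampling without replacement, via a Hoeffding--Serfling bound or by reduction to the with-replacement case).

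The only real difficulty is the non-decomposable term $\tfrac1t\sum_{i\in\mathrm{top}\,m}a_i$, the normalized sum of the $m$ largest $a_i$. Here I would invoke the classical variational representation
\[
\frac1t\sum_{i\in\mathrm{top}\,m} a_i \;=\; \min_{\lambda\in\R}\ \Bigl( \tfrac{m}{t}\,\lambda \;+\; \tfrac1t\sum_{i=1}^t (a_i-\lambda)_+ \Bigr),
\]
whose minimizer always lies in $[\min_i a_i,\ \max_i a_i]\subseteq[-(W+1),\,W+1]$ since $|a_i| = |\w^\top\x_i - y_i|\le W+1$ for every $i$ and every $\w\in\W$. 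The crucial gain is that, for each fixed $\lambda$, the inner expression is an \emph{average of point-wise functions} $(\x,y)\mapsto(\w^\top\x - y - \lambda)_+$. Since $|\min f - \min g|\le\sup|f-g|$, it now suffices to establish uniform convergence of $\tfrac1t\sum_i(\w^\top\x_i - y_i - \lambda)_+$ over the \emph{bounded} index set $\{(\w,\lambda):\w\in\W,\ |\lambda|\le W+1\}$; the mismatch in the coefficient $m/t$ between the $n$-set and the $s$-set is only $\O{W/s}$ and is absorbed into the lower-order terms.

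For this final step I would run the textbook Rademacher argument: the class $\{(\x,y)\mapsto\w^\top\x - y - \lambda\}$ has empirical Rademacher complexity $\O{(W+1)/\sqrt s}$ on $s$ points, composing with the $1$-Lipschitz map $(\cdot)_+$ does not increase it (Talagrand's contraction lemma), and since these functions are bounded by $2(W+1)$, a bounded-differences (McDiarmid) concentration — again in its without-replacement form — upgrades the expectation bound to a high-probability one, giving an $\O{(W+1)\sqrt{\log(1/\delta)/s}}$ deviation uniformly over the family. A triangle inequality across the three contributions, with a union bound over the $\O{1}$ of them (costing only a constant rescaling of $\delta$), then yields $\alpha(s,\delta) = \O{\sqrt{\log(1/\delta)/s}}$ with $W$ treated as a constant.

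I expect the main obstacle to be exactly the top-$m$ term: a direct attempt to concentrate it fails because the identity of the ``top-ranked'' points changes between the subsample and the full set and depends on $\w$, so the whole argument hinges on the variational reformulation, which trades the combinatorial $\max$ over subsets for a $\min$ over a single bounded scalar of a decomposable objective. One could alternatively control how $\tfrac1t\sum_{\mathrm{top}\,m}a_i$ varies with $\w$ via the Structural Lemma~\ref{lem:list} and then cover $\W$, but keeping the $\w$-dependence inside a standard Lipschitz function class, as above, is cleaner.
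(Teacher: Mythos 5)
Your proposal is correct, and it takes a genuinely different route from the one the paper intends (the proof of Theorem~\ref{thm:gen-bound-struct} is deferred to the full version, but the paper's template is visible in the pAUC argument of Appendix~\ref{app:uc-proof-pauc} and in its repeated reliance on Lemma~\ref{lem:list}: pointwise concentration for fixed $\w$, then $\O{\norm{\w-\w'}_2}$-Lipschitzness of both the population and sample losses via the structural lemma, then a covering-number argument over $\W$). Your algebraic reduction of the surrogate to an affine, fully decomposable part plus twice the top-$\lceil kt\rceil$ sum of the scores $a_i=\w^\top\x_i-y_i$ is exactly the $A$/$B$ split already used in Lemma~\ref{lem:prec}, and the identity checks out. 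The genuinely new ingredient is the Rockafellar--Uryasev representation $\frac{1}{t}\sum_{\mathrm{top}\,m}a_i=\min_{\lambda}\br{\frac{m}{t}\lambda+\frac{1}{t}\sum_i(a_i-\lambda)_+}$, which trades the $\w$-dependent combinatorial selection for a minimum, over a single scalar confined to $[-(W+1),W+1]$, of a decomposable objective; since $|\min f-\min g|\le\sup|f-g|$, it then suffices to prove uniform convergence for the $1$-Lipschitz affine-composition class $\bc{(\x,y)\mapsto(\w^\top\x-y-\lambda)_+}$ via Rademacher complexity and contraction, the without-replacement issue being handled exactly as the paper handles it elsewhere (Hoeffding--Serfling / reduction to with-replacement), and the $\O{1/s}$ mismatch between $\lceil kt\rceil/t$ and $\lceil ks\rceil/s$ being genuinely lower order. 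What your route buys is a shorter, more modular proof with no case analysis and no explicit $\epsilon$-net, which moreover extends verbatim to PRBEP and to any surrogate of the form ``decomposable part plus a function of the top-$m$ scores''; what the paper's route buys is uniformity of technique across all the measures it treats, since the structural lemma and covering argument still apply when no clean variational representation of the combinatorial part is available (as for pAUC, where the selected top fraction multiplies a per-pair loss rather than being summed directly).
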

We defer the proof to the full version of the paper. The above result can be extended to a large family of performances measures introduced in \cite{Joachims05} that have been widely adopted \cite{YeCLC12,DaskalakiKA06,KubatM97} such as F-measure, G-mean, and PRBEP. The above indicates that our methods are expected to output models that closely approach the empirical risk minimizer for a wide variety of performance measures. In the next section we verify that this is indeed the case for several real life and benchmark datasets.


\section{Experimental Results}
\label{sec:exps}
\begin{figure*}[t]
               \centering
               \subfigure[PPI]{
                              \includegraphics[scale=0.525]{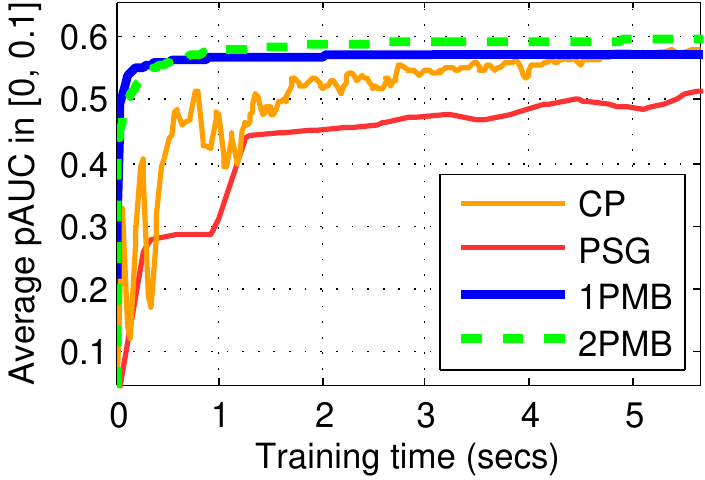}
                              \label{subfig:ppi-pauc}
               }
               \subfigure[KDDCup08]{
                              \includegraphics[scale=0.525]{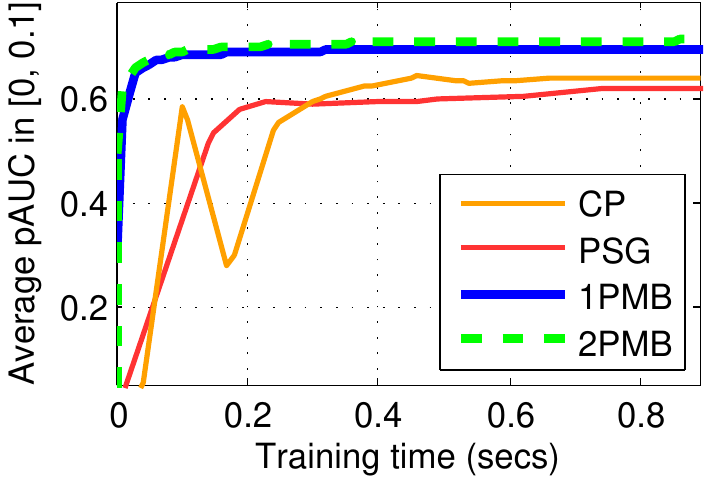}
                              \label{subfig:kdd08-pauc}
               }               
               \subfigure[IJCNN]{
               				
                              \includegraphics[scale=0.525]{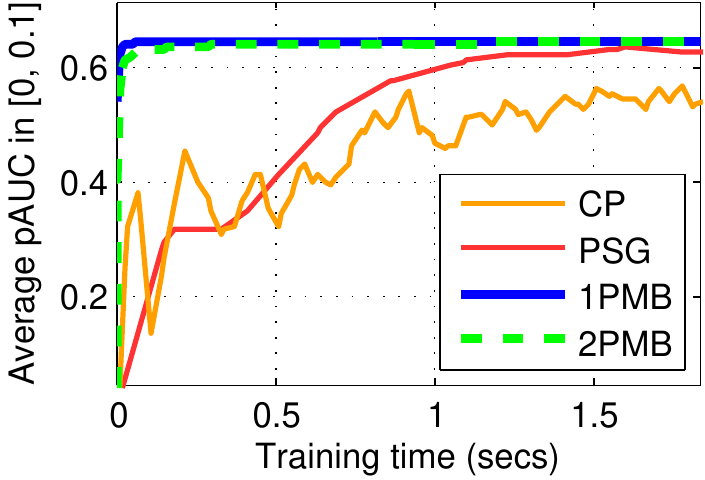}
                              \label{subfig:ijcnn1-pauc}
               }          
               \subfigure[Letter]{
                              \includegraphics[scale=0.525]{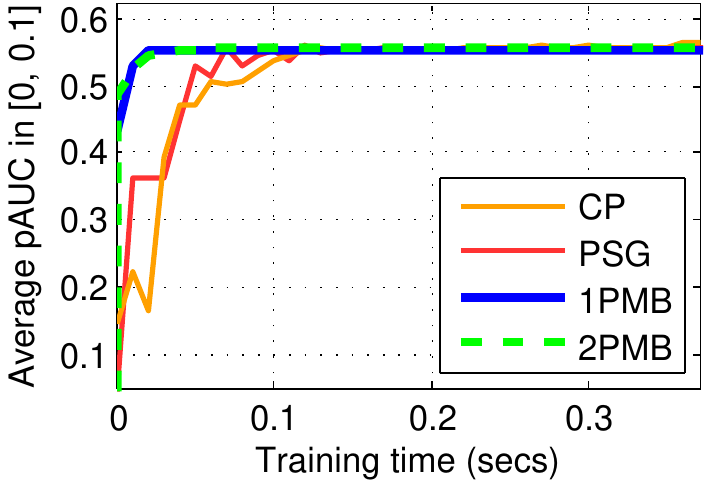}
                              \label{subfig:letter-pauc}
               }   
               \vspace*{-2ex}
               \caption{Comparison of stochastic gradient methods with the cutting plane (CP) and projected subgradient (PSG) methods on partial AUC maximization tasks. The epoch lengths/buffer sizes for \spmb and \tpmb were set to 500.
               \vspace*{-1ex}
				}
               \label{fig:pauc-expts}
\end{figure*}

\begin{figure*}[t]
               \centering
               \subfigure[PPI]{
                              \includegraphics[scale=0.525]{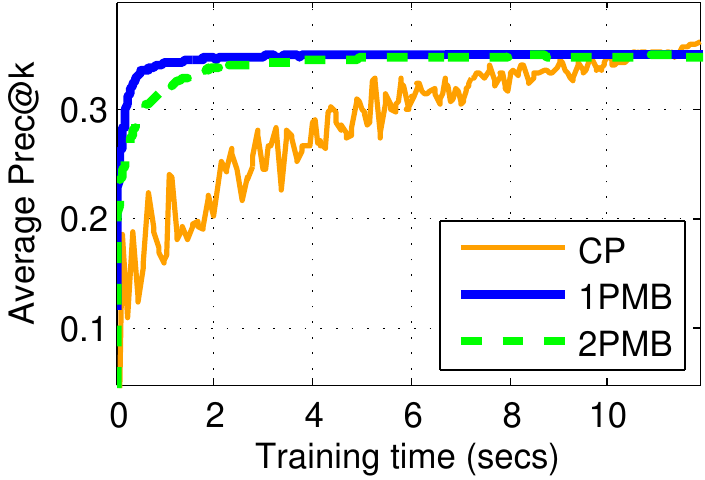}
                              \label{subfig:ppi-preck}
               }
               \subfigure[KDDCup08]{
                              \includegraphics[scale=0.525]{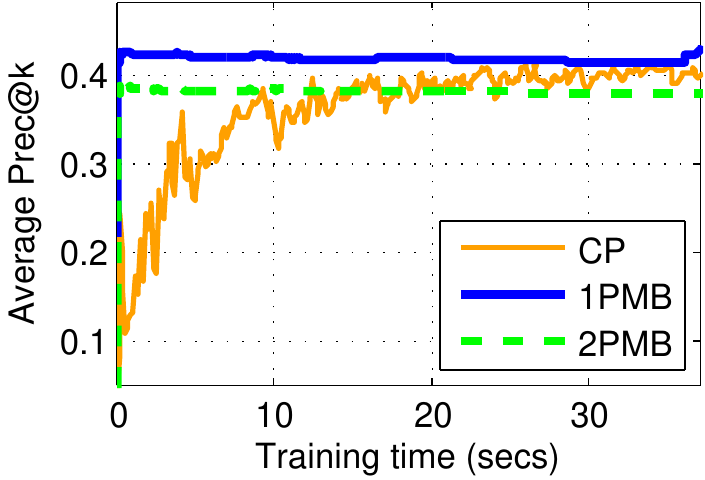}
                              \label{subfig:kdd08-preck}
               }               
               \subfigure[IJCNN]{
               				
                              \includegraphics[scale=0.525]{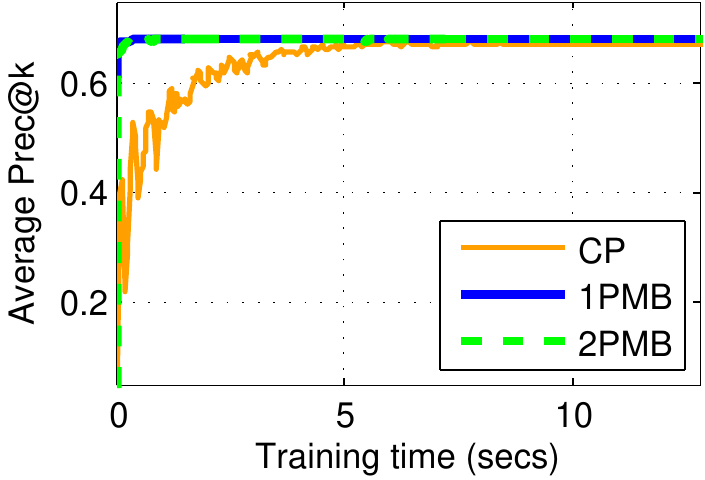}
                              \label{subfig:ijcnn1-preck}
               }              
               \subfigure[Letter]{
                              \includegraphics[scale=0.525]{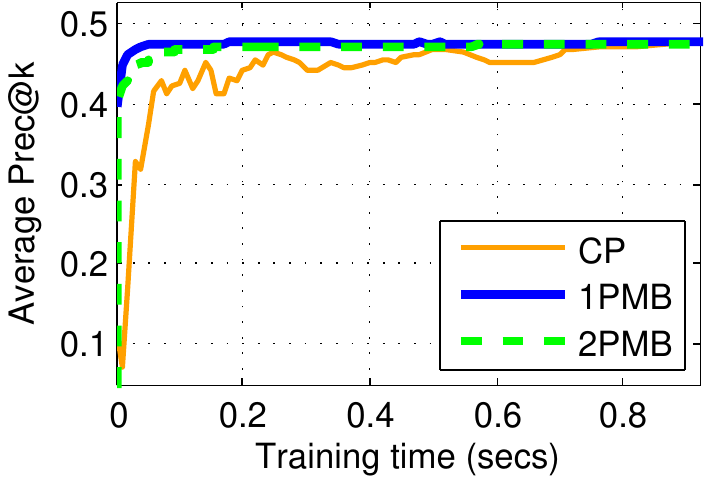}
                              \label{subfig:letter-preck}
               }  
               \vspace*{-2ex}
               \caption{Comparison of stochastic gradient methods with cutting plane (CP) methods on \preck maximization tasks. The epoch lengths/buffer sizes for \spmb and \tpmb were set to 500.
			}
               \label{fig:prbep-expts}
\end{figure*}

We evaluate the proposed stochastic gradient methods on several real-world and benchmark datasets.\\

{\bf Performance measures}: We consider three measures, 1) partial AUC in the false positive range $[0, 0.1]$, 2) Prec$@k$ with $k$ set to the proportion of positives (PRBEP), and 3) F-measure.\\

{\bf Algorithms}: For partial AUC, we compare against the state-of-the-art cutting plane (CP) and projected subgradient methods (PSG) proposed in \cite{NarasimhanA13b}; unlike the (online) stochastic methods considered in this work, the PSG method is a `batch' algorithm which, at each iteration, computes a subgradient-based update over the entire training set. For \preck and F-measure, we compare our methods against cutting plane methods from \cite{Joachims05}. We used structural SVM surrogates for all the measures.\\

{\bf Datasets}: We used several data sets for our experiments (see Table~\ref{tab:dataset-stats}); of these, KDDCup08 is from the KDD Cup 2008 challenge and involves a breast cancer detection task \cite{kddcupRYK08}, PPI contains data for a protein-protein interaction prediction task \cite{ppiQBK06}, and the remaining datasets are taken from the UCI repository \cite{uci}.\\

{\bf Parameters}: We used 70\% of the data set for training and the remaining for testing, with the results averaged over 5 random train-test splits. Tunable parameters such as step length scale were chosen using a small validation set. All experiments used a buffer of size $500$. Epoch lengths were set equal to the buffer size. Since a single iteration of the proposed stochastic methods is very fast in practice, we performed multiple passes over the training data (see Appendix~\ref{app:exps-pauc} for details).\\


\begin{table}
\centering
\begin{tabular}{|c|c|c|c|}
\hline
\textbf{Dataset}		&\textbf{Data Points}	& \textbf{Features}	& \textbf{Positives}\\\hline
KDDCup08	& 102,294		& 117		& 0.61\%\\\hline
PPI			& 240,249		& 85		& 1.19\%\\\hline
Letter		& 20,000		& 16		& 3.92\%\\\hline
IJCNN		& 141,691		& 22		& 9.57\%\\\hline
\end{tabular}
\caption{Statistics of datasets used.}
\label{tab:dataset-stats}
\end{table}

\begin{figure*}[t]
	\begin{minipage}[c]{0.55\linewidth}
		\centering
		\small
		\begin{table}[H]
			\begin{tabular}{|c|c|c|c|c|}
				\hline
				\textbf{Measure}	& \textbf{1PMB}	& \textbf{2PMB}	& \textbf{CP} \\\hline
				pAUC		& 0.10 (68.2)	& 0.15 (\bfseries 69.6)		& \,\,\,0.39 (62.5) \\\hline
				\preck	& 0.49 (\bfseries 42.7)	& 0.55 (38.7)		& 23.25 (40.8)	\\\hline
			\end{tabular}
			\caption{Comparison of training time (secs) and accuracies (in brackets) of \spmb, \tpmb and cutting plane methods for pAUC (in $[0, 0.1]$) and \preck maximization tasks on the KDD Cup 2008 dataset.}
			\label{tab:betas}
		\end{table}
	\end{minipage}
	\hspace{0.2cm}
	\begin{minipage}[t]{0.4\linewidth}
		\vspace*{-12ex}
	   \centering
					  \includegraphics[scale=0.525]{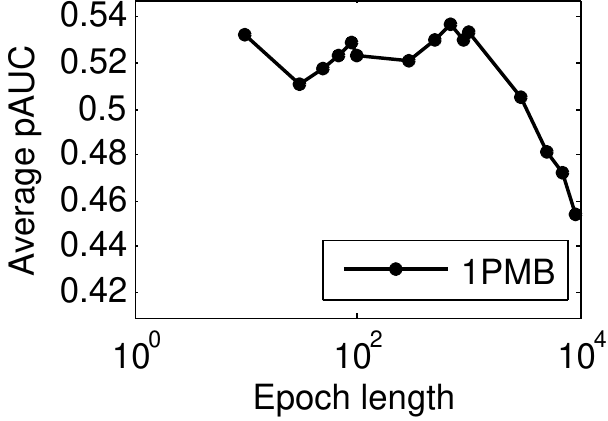}
					  \includegraphics[scale=0.525]{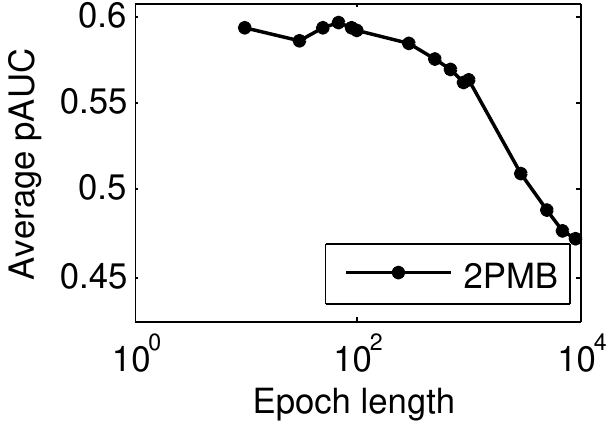}
	   \vspace{-15pt}
	   \caption{Performance of \spmb and \tpmb on the PPI dataset with varying epoch/buffer sizes for pAUC tasks.}
	   \label{fig:epoch}
	\end{minipage}
\end{figure*}

{\bf Results}: The results for pAUC and \preck maximization tasks are shown in the Figures~\ref{fig:pauc-expts} and \ref{fig:prbep-expts}. We found the proposed stochastic gradient methods to be several orders of magnitude faster than the baseline methods, all the while achieving comparable or better accuracies. For example, for the pAUC task on the KDD Cup 2008 dataset, the \textbf{1PMB} method achieved an accuracy of 64.81\% within 0.03 seconds, while even after 0.39 seconds, the cutting plane method could only achieve an accuracy of 62.52\% (see Table~\ref{tab:betas}). As expected, the (online) stochastic gradient methods were faster than the `batch' projected subgradient descent method for pAUC as well.
We found similar trends on \preck (see Figure~\ref{fig:prbep-expts}) and F-measure maximization tasks as well. For F-measure tasks, on the KDD Cup 2008 dataset, for example, the \spmb method achieved an accuracy of 35.92 within 12 seconds whereas, even after 150 seconds, the cutting plane method could only achieve an accuracy of 35.25.

The proposed stochastic methods were also found to be robust to changes in epoch lengths (buffer sizes) till such a point where excessively long epochs would cause the number of updates as well as accuracy to dip (see Figure~\ref{fig:epoch}). The \textbf{2PMB} method was found to offer higher accuracies for pAUC maximization on several datasets (see Table \ref{tab:betas} and Figure~\ref{fig:pauc-expts}), as well as be more robust to changes in buffer size (Figure~\ref{fig:epoch}).
We defer results on more datasets and performance measures to the full version of the paper.

The cutting plane methods were generally found to exhibit a zig-zag behaviour in performance across iterates. This is because these methods solve the dual optimization problem for a given performance measure; hence the intermediate models do not necessarily yield good accuracies. On the other hand, (stochastic) gradient based methods directly offer progress in terms of the primal optimization problem, and hence provide good intermediate solutions as well. This can be advantageous in scenarios with a time budget in the training phase.

\section*{Acknowledgements}
The authors thank Shivani Agarwal for helpful comments. They also thank the anonymous reviewers for their suggestions. HN thanks support from a Google India PhD Fellowship.


\bibliographystyle{unsrt}
\bibliography{ref}



\appendix


\allowdisplaybreaks


\section{Proof of Theorem~\ref{thm:online}}
\label{app:proof-online}
Broadly, we follow the proof structure of FTRL given in \cite{Rakhlin09, SahaJT12}. We first observe that the ``forward regret'' analysis follows easily despite the non-convexity of $\L_t$. That is, 
\begin{equation}
\label{eq:fr}
\sum_{t=1}^T \L_t(\vecw_{t+1})\leq  \vecx_{1:T}, y_{1:T}, \vecw_*) + \frac{\eta}{2}\|\vecw_*\|_2^2,
\end{equation}
where $\vecw_*=\arg\min_{\vecw\in \W} \vecx_{1:T}, y_{1:T}, \vecw)$. The proof of this statement can be found in \cite[Theorem 7]{SahaJT12} and is reproduced below as Lemma~\ref{lem:fwd-regret-proof} for completeness. Next, using strong convexity of the regularizer $\|\vecw\|_2^2$ and optimality of $\vecw_t$ and $\vecw_{t+1}$ for their respective update steps, we get:
\begin{eqnarray*}
\ell_\P(\x_{1:t},y_{1:t},\w_{t+1}) + \frac{\eta}{2} \|\vecw_{t+1}-\vecw_t\|_2^2 &\leq& \ell_\P(\x_{1:t},y_{1:t},\w_t)\\
\ell_\P(\x_{1:t-1},y_{1:t-1},\w_{t+1}) &\geq& \ell_\P(\x_{1:t-1},y_{1:t-1},\w_{t}) + \frac{\eta}{2} \|\vecw_{t+1}-\vecw_t\|_2^2,
\end{eqnarray*}
which when subtracted, give us
\begin{equation}
\label{eq:stability}
\eta \|\vecw_{t+1}-\vecw_t\|_2^2 \leq \L_t(\vecw_t)-\L_{t}(\vecw_{t+1})\leq G_t\|\vecw_{t+1}-\vecw_t\|_2,
\end{equation}
where the last inequality follows using the Lipschitz continuity of $\L_t$. We now use the fact that
\[
\sum_{t=1}^T \L_t(\vecw_t)=\sum_{t=1}^T \L_{t}(\vecw_{t+1})+\sum_{t=1}^T (\L_t(\vecw_t)- \L_{t}(\vecw_{t+1})),
\]
along with \eqref{eq:fr} and \eqref{eq:stability} to get
\[
\sum_{t=1}^T \L_t(\vecw_t) \leq \vecx_{1:T}, y_{1:T}, \vecw_*) + \frac{\eta}{2}\|\vecw_*\|_2^2 + \frac{\sum_{t=1}^TG_t^2}{\eta}.
\]
The result now follows by selecting $\eta=\sqrt{2\sum_{t=1}^TG_t^2/\norm{\w_\ast}_2^2}$.
\begin{lem}
\label{lem:fwd-regret-proof}
For the setting described in Theorem~\ref{thm:online}, we have
\[
\sum_{t=1}^T \L_t(\vecw_{t+1}) \leq \vecx_{1:T}, y_{1:T}, \vecw_*) + \frac{\eta}{2}\|\vecw_*\|_2^2
\]
\end{lem}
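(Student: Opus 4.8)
The plan is to reduce the statement to the classical ``be-the-leader'' lemma, using the telescoping structure of the $\L_t$'s to neutralize their non-convexity. Introduce the auxiliary functions $f_0(\w) := \frac{\eta}{2}\norm{\w}_2^2$ and $f_t(\w) := \L_t(\w)$ for $t = 1,\ldots,T$. The key observation is that, by \eqref{eq:lt} and the convention that the loss on the empty stream is $0$, we have $\sum_{s=1}^t \L_s(\w) = \ell_\P(\x_{1:t}, y_{1:t}, \w)$, so that $\sum_{s=0}^t f_s(\w) = \ell_\P(\x_{1:t}, y_{1:t}, \w) + \frac{\eta}{2}\norm{\w}_2^2$. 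Consequently the FTRL iterates satisfy $\w_{t+1} = \arg\min_{\w\in\W}\sum_{s=0}^t f_s(\w)$ for every $0 \le t \le T-1$, and $\w_1 = \arg\min_{\w\in\W} f_0(\w)$. In other words, although each individual $f_t = \L_t$ may be non-convex, each \emph{partial sum} $\sum_{s=0}^t f_s$ is exactly the regularized batch objective, and the FTRL rule is literally ``follow the leader'' on the sequence $(f_t)_{t\ge 0}$; this is precisely why the non-convexity of the instantaneous penalties does not obstruct the forward-regret analysis.

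Next I would establish the be-the-leader inequality $\sum_{t=0}^T f_t(\w_{t+1}) \le \sum_{t=0}^T f_t(\vecv)$ for every $\vecv \in \W$, by induction on $T$. The base case $T = 0$ is just $f_0(\w_1) \le f_0(\vecv)$, which holds since $\w_1$ minimizes $f_0$ over $\W$. For the inductive step, assume $\sum_{t=0}^{T-1} f_t(\w_{t+1}) \le \sum_{t=0}^{T-1} f_t(\w_T)$ and add $f_T(\w_{T+1})$ to both sides to get $\sum_{t=0}^{T} f_t(\w_{t+1}) \le \sum_{t=0}^{T-1} f_t(\w_T) + f_T(\w_{T+1})$. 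Since $\w_T = \arg\min_{\w\in\W}\sum_{t=0}^{T-1} f_t(\w)$, we may replace $\w_T$ by $\w_{T+1}$ in the first sum, yielding $\sum_{t=0}^{T} f_t(\w_{t+1}) \le \sum_{t=0}^{T} f_t(\w_{T+1})$; finally optimality of $\w_{T+1}$ for $\sum_{t=0}^{T} f_t$ over $\W$ gives $\sum_{t=0}^{T} f_t(\w_{T+1}) \le \sum_{t=0}^{T} f_t(\vecv)$. Note that this induction uses only the optimality of the iterates, never convexity of any $\L_t$.

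To finish, I would instantiate $\vecv = \w_*$ and unpack both sides. The left side is $f_0(\w_1) + \sum_{t=1}^T \L_t(\w_{t+1}) = \frac{\eta}{2}\norm{\w_1}_2^2 + \sum_{t=1}^T \L_t(\w_{t+1})$, while the right side is $\frac{\eta}{2}\norm{\w_*}_2^2 + \sum_{t=1}^T \L_t(\w_*) = \frac{\eta}{2}\norm{\w_*}_2^2 + \ell_\P(\x_{1:T}, y_{1:T}, \w_*)$. Dropping the nonnegative term $\frac{\eta}{2}\norm{\w_1}_2^2$ from the left side gives exactly $\sum_{t=1}^T \L_t(\w_{t+1}) \le \ell_\P(\x_{1:T}, y_{1:T}, \w_*) + \frac{\eta}{2}\norm{\w_*}_2^2$, as claimed. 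There is no genuine obstacle here: the argument is entirely standard once the telescoping reduction is made, and the only point needing care is the bookkeeping that treats the regularizer as a ``zeroth'' loss and keeps track of the empty-stream convention.
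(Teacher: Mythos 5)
Your proposal is correct and follows essentially the same route as the paper's proof: both treat the regularizer as a zeroth loss $\L_0(\w)=\frac{\eta}{2}\|\w\|_2^2$, observe that the FTRL iterates are exact leaders of the partial sums (so non-convexity of each $\L_t$ is irrelevant), and run the standard be-the-leader induction before instantiating at $\w_*$. Your version merely makes explicit the final step of dropping the nonnegative term $\frac{\eta}{2}\|\w_1\|_2^2$, which the paper leaves implicit.
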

\begin{proof}
Let $\L_0(\w) := \frac{\eta}{2}\norm{\w}_2^2$. Thus, we can equivalently write the FTRL update in \ref{eq:ftrl} as
\[
\vecw_{t+1} = \arg\min_{\vecw\in \W} \sum_{\tau=0}^t \L_\tau(\vecw).
\]
Now, using the optimality of $\w_{t+1}$ at time $t$, we get
\begin{align}
\sum_{\tau=0}^t \L_\tau(\vecw_{t+1}) \leq \sum_{\tau=0}^t \L_\tau(\w_\ast)
\end{align}
Combining this with the optimality of $\w_t$ at time $t-1$, we get
\begin{align}
\sum_{\tau=0}^{t-1} \L_\tau(\vecw_t) + \L_t(\vecw_{t+1}) \leq \sum_{\tau=0}^t \L_\tau(\vecw_{t+1}) \leq \sum_{\tau=0}^t \L_\tau(\w_\ast)
\end{align}
Repeating this argument gives us
\[
\sum_{\tau=0}^{t} \L_\tau(\vecw_{\tau+1}) \leq \sum_{\tau=0}^t \L_\tau(\w_\ast),
\]
which proves the result.
\end{proof}

\section{Proof of Lemma~\ref{lem:list}}
\label{app:proof-struct-lem}
We consider the following four exhaustive cases in turn:
\begin{enumerate}[\bfseries{Case} 1.]
	\item{$z_{i_k}\geq z_{j_k}$ and $z_{j_k}'\geq z_{i_k}'$}\\
		We have the following set of inequalities
		\begin{eqnarray*}
			g(z_{i_k}) &=& g(\ip{\w}{\x_{i_k}}-c_i)\\
					&\leq& g(\ip{\w'}{\x_{i_k}}-c_i) + \abs{\ip{\w-\w'}{\x_{i_k}}}\\
					&\leq& g(\ip{\w'}{\x_{i_k}}-c_i) + \norm{\w-\w'}_2\\
					&=& g(z'_{i_k}) + \norm{\w-\w'}_2\\
					&\leq& g(z'_{j_k}) + \norm{\w-\w'}_2,
		\end{eqnarray*}
		where the first inequality follows by the Lipschitz assumption, the second follows by Cauchy-Schwartz inequality and the last follows by the case assumption $z_{j_k}'\geq z_{i_k}'$ and the fact that $g$ is an increasing function. By renaming $i \leftrightarrow j$ and $\w \leftrightarrow \w'$, we also have $g(z'_{j_k}) \leq g(z_{i_k}) + \norm{\w-\w'}_2$. This establishes the result for the specific case.
	\item{$z_{i_k}\leq z_{j_k}$ and $z_{j_k}'\leq z_{i_k}'$}\\
		This case follows similar to the case above.
	\item{$z_{i_k}\geq z_{j_k}$ and $z_{j_k}'\leq z_{i_k}'$}\\
		Using the above conditions $z_{j_k}$ does not belong to the top $k$ elements of $z_1, \dots, z_t$, but both $z'_{i_k}$ and $z'_{j_k}$ belong to the top $k$ elements of $z_1', \dots, z_t'$. Using the pigeonhole principle, there exists an index $s$ such that $z_s\geq z_{i_k}$ but $z_s\leq z'_{j_k}$. Hence, using arguments similar to Case 1, we get the following two bounds: 
\begin{align*}
|g(z'_{i_k})-g(z_s)|&\leq \|\vecw-\vecw'\|_2,\\
|g(z_s)-g(z'_{j_k})|&\leq \|\vecw-\vecw'\|_2.
\end{align*}
We also have $\abs{g(z'_{i_k}) - g(z_{i_k})} \leq \abs{\ip{\w-\w'}{\x_{i_k}}} \leq \norm{\w-\w'}_2$. Adding these three inequalities gives us the desired result.
	\item{$z_{i_k}\leq z_{j_k}$ and $z_{j_k}'\geq z_{i_k}'$}\\
		This case follows similar to the case above. \qedhere
\end{enumerate}
These cases are exhaustive and we thus conclude the proof.


\section{Stability result for \preck}
\label{app:stab-preck}
\begin{lem}
\label{lem:prec}
Let $\ell_{\text{\preck}}$ be the surrogate for \preck as defined in \eqref{eq:prec}, $\|\x_t\|_2\leq 1, \forall t$ and $\L_t$ be defined as in \eqref{eq:lt}. Then $\forall\w,\w'\in\W$, $|\L_t(\vecw)-\L_t(\vecw')|\leq 8\|\vecw-\vecw'\|_2$.
\end{lem}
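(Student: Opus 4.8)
The plan is to bound the Lipschitz constant of $\L_t(\w) = \ell_{\text{\preck}}(\x_{1:t},y_{1:t},\w) - \ell_{\text{\preck}}(\x_{1:(t-1)},y_{1:(t-1)},\w)$ by carefully controlling how the structural surrogate \eqref{eq:prec} changes when (a) $\w$ is perturbed to $\w'$, and (b) one data point is added. First I would rewrite the inner maximization in \eqref{eq:prec}: for a fixed $\w$, the maximizer $\bar\vecy$ assigns $\bar y_i = +1$ to exactly the $kt$ indices with the largest values of $q_i(\w) := \x_i^\top\w - y_i$ (since the objective is $\sum_i \bar y_i(\x_i^\top\w - y_i) - \sum_i y_i\x_i^\top\w$ up to terms independent of $\bar\vecy$, wait — more precisely the $\bar\vecy$-dependent part is $\sum_i \bar y_i(\x_i^\top\w) - \sum_i y_i\bar y_i = \sum_i \bar y_i(\x_i^\top\w - y_i)$), so the optimal value is $\sum_{i \in \text{top-}kt} (\x_i^\top\w - y_i) - \sum_i y_i \x_i^\top\w$, summing the $kt$ largest entries of the list $\{q_i(\w)\}$. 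Thus $\ell_{\text{\preck}}(\w)$ is, up to the linear term $-\sum_i y_i\x_i^\top\w$ (which contributes at most $t\cdot\|\w-\w'\|$ naively, but cancels in the difference $\L_t$ since it's additive over points — careful here, actually $-\sum_{i=1}^t y_i\x_i^\top\w$ minus $-\sum_{i=1}^{t-1}y_i\x_i^\top\w = -y_t\x_t^\top\w$, whose Lipschitz constant is $\le 1$), a sum of the top $kt$ order statistics of a list of shifted inner products $q_i = \ip{\w}{\x_i} - y_i$, which is exactly the setting of the Structural Lemma~\ref{lem:list} with $c_i = y_i$ and $g$ the identity (which is $1$-Lipschitz and increasing).

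The core step is then: write $\ell_{\text{\preck}}(\x_{1:t},\cdot,\w)$'s $\bar\vecy$-part as $\sum_{k'=1}^{kt} q_{\sigma(k')}$ where $q_{\sigma(1)} \ge \cdots \ge q_{\sigma(t)}$ is the sorted list for $\w$, and similarly $\sum_{k'=1}^{kt} q'_{\sigma'(k')}$ for $\w'$. By Lemma~\ref{lem:list} applied positionwise, $|q_{\sigma(k')} - q'_{\sigma'(k')}| \le 3\|\w-\w'\|_2$ for each $k'$. The subtlety is that the number of positives required, $\lceil kt\rceil$ versus $\lceil k(t-1)\rceil$, changes by at most one between the two loss terms in $\L_t$; I will handle this by pairing up the $\lceil k(t-1)\rceil$ "common" positions and separately bounding the contribution of the at most one extra order statistic (bounded by $\|\w\|$-type quantities, or better, noting a single order statistic of $\{q_i\}$ differs by at most $3\|\w-\w'\|$ as well via the lemma, plus its absolute magnitude is controlled — actually for the Lipschitz bound we only ever compare the \emph{same} position across $\w$ and $\w'$, so we get $3\|\w-\w'\|$ for each of the paired positions; the unpaired extra position, if any, appears in both $\ell_{\text{\preck}}(\x_{1:t},\w)$ and $\ell_{\text{\preck}}(\x_{1:t},\w')$ with the same index count so it too contributes $\le 3\|\w-\w'\|$). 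Summing: the difference between the top-$kt$ sums at $\w$ versus $\w'$ is at most $3\|\w-\w'\|$ times... no — I must be careful: summing $3\|\w-\w'\|$ over $kt$ positions gives $3kt\|\w-\w'\|$, which is vacuous. The fix, which is the real content, is that $\L_t$ is a \emph{difference of differences}: $\L_t(\w) - \L_t(\w') = [\ell_t(\w)-\ell_{t-1}(\w)] - [\ell_t(\w')-\ell_{t-1}(\w')]$, and one should instead write this as $[\ell_t(\w)-\ell_t(\w')] - [\ell_{t-1}(\w)-\ell_{t-1}(\w')]$ and argue that these two bracketed quantities are \emph{close to each other}, not individually small.

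Concretely, I expect the argument to go: $\ell_t(\w) - \ell_t(\w')$ equals (top-$kt$ sum of $\{q_i(\w)\}_{i\le t}$) minus (top-$kt$ sum of $\{q_i(\w')\}_{i\le t}$) minus $\sum_{i\le t} y_i\x_i^\top(\w-\w')$; subtracting the analogous expression for $t-1$ makes the linear terms collapse to $-y_t\x_t^\top(\w-\w')$ (magnitude $\le \|\w-\w'\|$), and leaves the difference of two "top-$k$ sum differences" for the $t$-point and $(t-1)$-point lists. Here I would use that adding one point $\x_t$ to the list changes which elements are in the top $kt$ by a bounded "shift of one", so the top-$kt$ sum over $t$ points and the top-$k(t-1)$ sum over $t-1$ points differ by just a few order statistics (those near the boundary plus possibly $q_t$ itself), each of which is positionwise $3\|\w-\w'\|$-stable by Lemma~\ref{lem:list}. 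Counting the bounded number of "boundary" terms — I anticipate at most a constant like $2$ order statistics entering or leaving, contributing $2 \times 3\|\w-\w'\| = 6\|\w-\w'\|$, plus the $q_t$ term's stability $\le 3\|\w-\w'\|$ (via the lemma with $g$ = identity, but $q_t$ appears in $\ell_t$ for both $\w,\w'$ so this is really one more positionwise comparison), plus the linear term $\le \|\w-\w'\|$ — totalling $\le 8\|\w-\w'\|_2$ after bookkeeping. The main obstacle I foresee is precisely this combinatorial bookkeeping: rigorously identifying exactly which order statistics can differ between the $t$-point top-$\lceil kt\rceil$ selection and the $(t-1)$-point top-$\lceil k(t-1)\rceil$ selection (the count $\lceil kt\rceil - \lceil k(t-1)\rceil \in \{0,1\}$, and the index-set change is controlled but requires an interleaving/pigeonhole argument analogous to Cases 3–4 of the Structural Lemma proof), and making sure the per-position applications of Lemma~\ref{lem:list} are valid (i.e. that the shifts $c_i = y_i$ are genuinely independent of $\w,\w'$, which they are). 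The constant $8$ will fall out of adding up roughly $3+3+1$ or $6+1+1$ such contributions; the exact split is a routine but fiddly calculation I would carry out in the full proof.
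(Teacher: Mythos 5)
Your proposal matches the paper's proof in all essentials: the same separation of the decomposable linear term (whose contribution to $\L_t$ collapses to $-y_t\x_t^\top(\w-\w')$ and costs $1$), the same rewriting of the max as a top-$\lceil kt\rceil$ sum of the shifted scores $z_i = \ip{\w}{\x_i} - y_i$, the same reduction of $\L_t(\w)-\L_t(\w')$ to a handful of boundary order statistics controlled positionwise by Lemma~\ref{lem:list} with $g$ the identity and $c_i = y_i$, and the same case split on $\lceil kt\rceil - \lceil k(t-1)\rceil \in \{0,1\}$ crossed with whether $z_t$ (resp.\ $z_t'$) enters the top set for $\w$ (resp.\ $\w'$). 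The ``fiddly bookkeeping'' you defer is exactly the paper's eight subcases, each of which telescopes to two or three applications of the structural lemma plus the linear term.
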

\begin{proof}
Recall that, the loss function corresponding to \preck is defined as: 
\begin{align}
  \label{eq:prec_cvx1}
  \ell_{\text{\preck}}(\x_{1:t}, y_{1:t}, \vecw) &= \max_{\substack{q\in \{-1, 1\}^t\\\sum_i (q_i+1)=2\lceil k t\rceil}} \sum_{i=1}^t (q_i-y_i) \vecx_i^T\vecw - \sum_{i=1}^t q_i y_i \\
&= \underbrace{\max_{\substack{q\in \{-1, 1\}^t\\\sum_i (q_i+1)=2\lceil k t\rceil}} \sum_{i=1}^t q_i \vecx_i^T\vecw - \sum_{i=1}^t q_i y_i}_{A(\x_{1:t},y_{1:t},\w)} - \underbrace{\sum_{i=1}^ty_i\vecx_i^T\vecw}_{B(\x_{1:t},y_{1:t},\w)}
\end{align}
Since $B(\x_{1:t},y_{1:t},\w)$ is a decomposable loss function, it can at most add a constant (because of the assumptions made by us, that constant can be shown to be no bigger than $1$) to the Lipschitz constant of $\L_t$. Hence we concentrate on bounding the contribution of $A(\x_{1:t},y_{1:t},\w)$ to the Lipschitz constant of $\L_t$. Define $z_i=\ip{\vecw}{\vecx_i}-y_i$ and $z_i'=\ip{\vecw'}{\vecx_i}-y_i$. It will be useful to rewrite $A(\x_{1:t},y_{1:t},\w)$ as follows (and drop mentioning the dependence on $\x_{1:t}$ for notational simplicity): 
\begin{equation}
  \label{eq:prec1}
  p_t(\vecw) = 2\max_{\substack{q\in \{1, 0\}^t\\\sum_i q_i=\lceil k t\rceil}} \sum_{i=1}^t q_i z_i-\sum_{i=1}^t z_i. 
\end{equation}
Similarly, we can define $p_{t-1}(\vecw)$ as well. Now we have
\begin{eqnarray*}
\L_t(\w) - \L_t(\w') &=& p_t(\vecw)-p_{t-1}(\vecw)-p_t(\vecw')+p_{t-1}(\vecw') + y_t\x_t(\w'-\w)\\
					&\leq& \underbrace{p_t(\vecw)-p_{t-1}(\vecw)-p_t(\vecw')+p_{t-1}(\vecw')}_{\Delta_t(\w,\w')} + \norm{\w-\w'}_2
\end{eqnarray*}
Our mail goal in the sequel will be to show that $\Delta_t(\w,\w')\leq\O{\norm{\w-\w'}_2}$ which shall establish the desired Lipschitz continuity result.
Now for both vectors $\w,\w'$ and time instances $t-1,t$, let us denote the optimal assignments as follows:
\begin{align*}
  a^t&=\underset{\substack{q\in \{1, 0\}^t\\\sum_i q_i=\lceil k t\rceil}}{\arg\max} \sum_{i=1}^t q_i z_i,\qquad\qquad\qquad &b^t&=\underset{\substack{q\in \{1, 0\}^t\\\sum_i q_i=\lceil k t\rceil}}{\arg\max} \sum_{i=1}^t q_i z_i',\\
  a^{t-1}&=\underset{\substack{q\in \{1, 0\}^{(t-1)}\\\sum_i q_i=\lceil k (t-1)\rceil}}{\arg\max} \sum_{i=1}^{t-1} q_i z_i,\quad\quad\quad &b^{t-1}&=\underset{\substack{q\in \{1, 0\}^{(t-1)}\\\sum_i q_i=\lceil k (t-1)\rceil}}{\arg\max} \sum_{i=1}^{t-1} q_i z_i'.
\end{align*}
Also, define indices $1\leq i_r \leq t-1$ and $1\leq j_s\leq t-1$ as: 
\begin{align*}
	&z_{i_1}\geq z_{i_2}\dots \geq z_{i_{t-1}},\\
	&z'_{j_1}\geq z'_{j_2}\dots \geq z'_{j_{t-1}}.
\end{align*}
Now, note that \eqref{eq:prec1} involves maximization of a linear function, hence the optimizing assignment $q$ will always lie on the boundary of the Boolean hypercube with the cardinality constraint. Hence, $a^t$ can be obtained by setting $a^t_{i_r}=1,\ \forall 1\leq r\leq \lceil k t\rceil$ and $a^t_{i_r}=0,\ \forall r>\lceil k t\rceil$, similarly for $b_t$. We consider the following two cases and within each, four subcases which establish the result.

In the rest of the proof, all invocations of Lemma~\ref{lem:list} shall use the identity function for $g(\cdot)$ and $c_i = y_i$. Clearly this satisfies the prerequisites of Lemma~\ref{lem:list} since the identity function is $1$-Lipschitz and increasing.

\renewcommand{\labelenumii}{\bfseries{Case} \arabic{enumi}.\arabic{enumii}}

\begin{enumerate}[\bfseries{Case} 1]
	\item $\lceil k t\rceil=\lceil k (t-1)\rceil=\alpha$\\
	Within this, we have the following four exhaustive subcases:
	\begin{enumerate}
		\item $z_t\leq z_{i_\alpha}$ and $z_t'\leq z_{j_\alpha}'$\\
		The above condition implies that both $a^t_t=0$ and $b^t_t=0$. Furthermore, $a^t_{1:(t-1)}=a^{t-1}$ and $b^t_{1:(t-1)}=b^{t-1}$. As a result we have
		\[
			\Delta_t(\w,\w')=-z_t+z_t'=-\ip{\vecw}{\vecx_t}+\ip{\vecw'}{\vecx_t}\leq \|\vecw-\vecw'\|_2.
		\]
		\item $z_t> z_{i_\alpha}$ and $z_t'\leq z_{j_\alpha}'$\\
		The above condition implies that $a^t_t=1$ and $b^t_t=0$. Hence,  $b^t_{1:(t-1)}=b^{t-1}$. Also, as $a^t_t$ is turned on, the cardinality constraint dictates that one previously positive index should be turned off. That is,  $a^t_{i_\alpha}=0$, but $a^{t-1}_{i_\alpha}=1$. Finally, $a^t_{i_r}=a^{t-1}_{i_r}, r\neq \alpha\ \text{ and }\ r < t$. Using the above observations, we have the following sequence of inequalities:
		\begin{eqnarray*}
			\Delta_t(\w,\w') &=& (2(z_t-z_{i_\alpha}) - z_t) - (0 - z_t')\\
							&=& (z_t - z_{i_\alpha}) + (z_t' - z_{i_\alpha})\\
							&=& (z_t - z_t') + 2(z_t' - z_{i_\alpha})\\
							&\leq& (z_t - z_t') + 2(z_{j_\alpha}' - z_{i_\alpha})\\
							&\leq& 7\norm{\w-\w'}_2,
		\end{eqnarray*}
		where the third inequality follows from the case assumptions and the final inequality follows from an application of Cauchy Schwartz inequality and Lemma~\ref{lem:list}.
		\item $z_t\leq z_{i_\alpha}$ and $z_t'> z_{j_\alpha}'$\\
		In this case, we can analyze similarly to get
		\begin{eqnarray*}
			\Delta_t(\w,\w') &=& (0 - z_t) - (2(z_t'-z'_{j_\alpha}) - z_t')\\
							&=& (z'_{j_\alpha} - z_t) + (z'_{j_\alpha} - z_t')\\
							&=& (z_t' - z_t) + 2(z'_{j_\alpha} - z_t')\\
							&\leq& (z_t' - z_t)\\
							&\leq& 3\norm{\w-\w'}_2.
		\end{eqnarray*}
		\item $z_t> z_{i_\alpha}$ and $z_t'> z_{j_\alpha}'$\\
		In this case, both $a^t_t=1$ and $b^t_t=1$. Hence, both $a^t_{i_\alpha}=0$ and $b^t_{j_\alpha}=0$. The remaining terms of $a^t$ and $a^{t-1}$ (similarly for $b^t$ and $b^{t-1}$) remain the same. That is, we have
		\begin{eqnarray*}
			\Delta_t(\w,\w') &=& (2(z_t-z_{i_\alpha})-z_t) - (2(z_t'-z'_{j_\alpha})-z_t')\\
							&=& (z_t - z_t') - 2(z_{i_\alpha} - z'_{j_\alpha})\\
							&\leq& 7\norm{\w-\w'}_2.
		\end{eqnarray*}
	\end{enumerate}
	\item $\lceil k t\rceil=\lceil k (t-1)\rceil+1=\alpha$\\
	Here again, we consider the following four exhaustive subcases:
	\begin{enumerate}
		\item $z_t\leq z_{i_\alpha}$ and $z_t'\leq z_{j_\alpha}'$\\
		The above condition implies that $a^t_t=0$ and $b^t_t=0$. Also, one new positive is included in both $a^t$ and $b^t$, i.e., $a^t_{i_\alpha}=1$ and $b^t_{j_\alpha}=1$. The remaining entries of $a^t$ and $b^t$ remains the same. Hence,
		\[
			\Delta_t(\w,\w')=(2z_{i_\alpha}-z_t)-(2z'_{j_\alpha}-z'_t) = 2(z_{i_\alpha} - z'_{j_\alpha})-(z_t-z_t') \leq 9\norm{\w-\w'}_2.
		\]
		\item $z_t> z_{i_\alpha}$ and $z_t'\leq z_{j_\alpha}'$\\
		The above condition implies that $a^t_t=1$ and $b^t_t=0$. Also, $b^t_{j_\alpha}=1$. The remaining entries of $a^t$ and $b^t$ remains the same. Hence we have
		\begin{eqnarray*}
			\Delta_t(\w,\w') &=& (2z_t-z_t)-(2z'_{j_\alpha}-z'_t)\\
							&=& (z_t - z'_{j_\alpha}) + (z_t' - z'_{j_\alpha})\\
							&=& (z_t - z'_t) + 2(z_t' - z'_{j_\alpha})\\
							&\leq& 3\norm{\w-\w'}_2.
		\end{eqnarray*}
		\item $z_t\leq z_{i_\alpha}$ and $z_t'> z_{j_\alpha}'$\\
		In this case we have
		\begin{eqnarray*}
			\Delta_t(\w,\w') &=& (2z_{i_\alpha}-z_t)-(2z'_t-z'_t)\\
							&=& (z_{i_\alpha} - z_t) + (z_{i_\alpha} - z_t')\\
							&=& (z'_t - z_t) + 2(z_{i_\alpha} - z_t')\\
							&\leq& (z'_t - z_t) + 2(z_{i_\alpha} - z_{j_\alpha}')\\
							&\leq& 7\norm{\w-\w'}_2.
		\end{eqnarray*}
		\item $z_t> z_{i_\alpha}$ and $z_t'> z_{j_\alpha}'$\\
		The above condition implies that $a^t_t=1$ and $b^t_t=1$. The remaining entries of $a^t$ and $b^t$ remains the same. Hence,
		\[
			\Delta_t(\w,\w')=(2z_t-z_t)-(2z'_{t}-z'_t)=z_t-z_t'\leq 3\norm{\w-\w'}_2.
		\]
	\end{enumerate}
\end{enumerate}
Taking the worst case Lipschitz constants from these 8 subcases and adding the contribution of $B(\x_{1:t},y_{1:t},\w)$ concludes the proof.
\end{proof}

\section{Extension to Precision-Recall Break Even Point (PRBEP)}
\label{sec:prbep-stab}
We note that the above discussion can easily be extended to prove stability results for the structural surrogate loss for the PRBEP performance measure \cite{Joachims05}. Recall that the PRBEP measure essentially measures the precision (equivalently recall) of a predictor when thresholded at a point that equates the precision and recall. Since we have $\text{Prec} = \frac{\text{TP}}{\text{TP}+\text{FP}}$ and $\text{Rec} = \frac{\text{TP}}{\text{TP}+\text{FN}}$, the break even point is reached at a threshold where $\text{TP}+\text{FP} = \text{TP}+\text{FN}$. Notice that the left hand side equals the number of points that are predicted as positive whereas the right hand side equals the number of points that are actual positives.

Thus, the PRBEP is achieved at a threshold that predicts as many points as positive as there are actual positives which gives us the formal definition of this performance measure
\begin{equation}
\text{PRBEP}(\w) := \sum_{j:\T_{\br{\frac{t_+}{t},t}}(\x_j,\w) = 1}\Ind{y_j = 1}.
\end{equation}
Note that this is equivalent to the definition of \preck with $k = \frac{t_+}{t}$. Correspondingly, we can also define the structural SVM surrogate for this performance measure as
\begin{equation}
  \label{eq:prbep}
  \ell_{\text{PRBEP}}(\vecw) = \max_{\substack{\bar{\vecy}\in \{-1, +1\}^t\\\sum_i (\bar{y}_i+1)=2t_+}} \sum_{i=1}^t (\bar{y}_i-y_i) \vecx_i^T\vecw - \sum_{i=1}^t y_i\bar{y}_i.
\end{equation}

Given this, it is easy to see that the proof of Lemma~\ref{lem:prec} would apply to this case as well. The only difference in applying the analysis would be that Case 1 and its subcases would apply when $y_t < 0$ which is when the incoming point is negative and hence the number of actual positives in the stream does not go up. Case 2 and its subcases would apply when $y_t > 0$ in which case the number of points to be considered while calculating precision would have to be increased by 1.


\newcommand{\termm}{\textrm{term}}
\newcommand{\Pff}{\widetilde{\mathcal{R}}}
\newcommand{\paucc}{\textrm{pAUC}}
\newcommand{\Xbb}{\vec{Z}}
\newcommand{\exxL}{\widetilde{\L}}
\newcommand{\hell}{\widehat{\ell}}
\newcommand{\Expp}[2]{{{\mathbb E}}_{#1}\bsd{{#2}}}
\newcommand{\Probb}{{\mathbb P}}

\section{Online-to-batch Conversion}
\label{app:otb}
This section presents a proof of the regret bound in the batch model considered in Theorem~\ref{thm:otb-main} and a proof sketch of the online-to-batch conversion result. The full proof shall appear in the full version of the paper. We will consider in this section, the pAUC measure in the \tpmb setting wherein positives are assumed to reside in the buffer and negatives are streaming in. The case of the \preck measure in the usual \spmb setting can be handled similarly. Additionally, we will show in Appendix~\ref{app:uc-proof-pauc} that for the case of pAUC, the contributions from a large enough buffer of randomly chosen positive points mimics the contributions of the entire population of positive points. Thus, for pAUC, it suffices to show the online-to-batch conversion bounds just with respect to the negatives. We clarify this further in the discussion.

\subsection{Regret Bounds in the Modified Framework}
We prove the following lemma which will help us in instantiating our online-to-batch conversion proofs.
\begin{lem}
\label{lem:regret-modified}
For the surrogate losses of \preck and pAUC, we have $R(T, s) \leq \sqrt s\cdot R(T)$
\end{lem}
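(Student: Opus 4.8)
The plan is to relate the batched penalty functions to the single-point penalties of a ``refined'' stream and then invoke Theorem~\ref{thm:online} directly on the batched process, so that no fresh appeal to the structural Lemma~\ref{lem:list} is required.

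First I would unfold each batch $\Zb_t$ into its $s$ constituent points, obtaining a stream $\x_1,\ldots,\x_T$ in which the points of $\Zb_t$ occupy positions $(t-1)s+1,\ldots,ts$. Writing $\L^{(s)}_t(\w) = \ell_\P(\Zb_{1:t},\w) - \ell_\P(\Zb_{1:(t-1)},\w)$ for the batched penalty and $\L_\tau(\w) = \ell_\P(\x_{1:\tau},\w) - \ell_\P(\x_{1:(\tau-1)},\w)$ for the single-point penalties of the refined stream, the key (telescoping) identity is
\[
\L^{(s)}_t(\w) = \sum_{\tau=(t-1)s+1}^{ts} \L_\tau(\w),
\]
since the right-hand side collapses to $\ell_\P(\x_{1:ts},\w) - \ell_\P(\x_{1:(t-1)s},\w)$. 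Because the batched FTRL update \eqref{eq:ftrl} is again just the regularized batch problem on $\Zb_{1:t}$, and $\ell_\P$ is convex for the \preck and pAUC surrogates, Theorem~\ref{thm:online} applies verbatim to the $T/s$-round batched process once we supply Lipschitz constants $G^{(s)}_t$ for the $\L^{(s)}_t$.

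Next I would read these Lipschitz constants off the identity above: if each $\L_\tau$ is $G_\tau$-Lipschitz — which is exactly what Lemma~\ref{lem:prec} gives for \preck (with $G_\tau\le 8$) and what the analogous stability analysis gives for pAUC — then by the triangle inequality $\L^{(s)}_t$ is $\big(\sum_{\tau=(t-1)s+1}^{ts} G_\tau\big)$-Lipschitz. Substituting into the bound of Theorem~\ref{thm:online} for the batched process and applying Cauchy--Schwarz inside each block, $\big(\sum_{\tau=(t-1)s+1}^{ts} G_\tau\big)^2 \le s\sum_{\tau=(t-1)s+1}^{ts} G_\tau^2$, and then summing over the $T/s$ blocks gives $\sum_{t=1}^{T/s}\big(\sum_{\tau} G_\tau\big)^2 \le s\sum_{\tau=1}^T G_\tau^2$. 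Hence the regret bound guaranteed by Theorem~\ref{thm:online} for the batched process is at most
\[
\norm{\w^\ast}_2 \frac{\sqrt{2 s \sum_{\tau=1}^T G_\tau^2}}{T} = \sqrt s \cdot \norm{\w^\ast}_2 \frac{\sqrt{2\sum_{\tau=1}^T G_\tau^2}}{T},
\]
and since the trailing factor is precisely the regret bound of Theorem~\ref{thm:online} for the un-batched ($s=1$) process — which is how $R(T)$ is used throughout — we obtain $R(T,s)\le \sqrt s\cdot R(T)$.

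The only place that needs extra care is pAUC, whose penalty is asymmetric in positives and negatives and is analyzed in the \tpmb setting with positives held in a buffer and negatives streamed: there I would run the argument above on the stream of negatives only, relying on the fact (established in Appendix~\ref{app:uc-proof-pauc}) that a sufficiently large random buffer of positives faithfully represents the positive population, so that it suffices to track the streamed negatives. Thus the main conceptual point is not an inequality but the bookkeeping: recognizing that a batched penalty is literally a sum of $s$ single-point penalties, so that the $\sqrt s$ overhead comes entirely from trading $s$-fewer rounds against an $s$-times-larger per-round Lipschitz constant inside the $\sqrt{\sum_t G_t^2}$ dependence of the FTRL bound.
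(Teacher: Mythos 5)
Your proof is correct and follows essentially the same route as the paper's: both decompose the batched penalty $\L_t$ telescopically into $s$ single-point penalties (the paper via the intermediate functions $g_t(\w,i)$), bound the batched Lipschitz constant by $s$ times the single-point one, and plug this into the FTRL regret bound over $T/s$ rounds to extract the $\sqrt{s}$ factor. Your per-block Cauchy--Schwarz step is a marginally sharper way to organize the final accounting when the $G_\tau$ are non-uniform, but since $G_\tau$ is uniformly bounded for \preck and pAUC it yields the same conclusion.
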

\begin{proof}
The only thing we need to do is analyze one time step for changes in the Lipschitz constant. Fix a time step $t$ and let $\Zb_t = \bc{\x_{t,1},\x_{t,2},\ldots,\x_{t,s}}$. Also, let $g_t(\w,i) := \ell_\P(\Zb_1,\ldots,\Zb_{t-1},\x_{t,{1:i}},\w)$ for any $i = 1 \ldots s$ (note that this gives us $g_t(\w,s) = \ell_\P(\Zb_1,\ldots,\Zb_t,\w)$). Also let us abuse notation to denote $g_t(\w,0) := \ell_\P(\Zb_1,\ldots,\Zb_{t-1},\w) = g_{t-1}(\w,s)$. Let the Lipschitz constant in the model with batch size $s$ be denoted as $G^s_t$. Thus, we have $G^1_t = G_t$, the Lipschitz constant for the problem in the original model (i.e. for $s=1$). Then we have, for any $\w,\w'\in\W$,
\begin{eqnarray*}
\abs{\L_t(\w) - \L_t(w')} &=& \abs{\ell_\P(\Zb_{1:t},\w) - \ell_\P(\Zb_{1:t-1},\w) - \ell_\P(\Zb_{1:t},\w') + \ell_\P(\Zb_{1:t-1},\w')}\\
						  &=& \abs{g_t(\w,s) - g_t(\w,0) - g_t(\w',s) + g_t(\w',0)}\\
						  &=& \abs{\sum_{i=1}^s g_t(\w,i) - g_t(\w,i-1) - g_t(\w',i) + g_t(\w',i-1)}\\
						  &\leq& \sum_{i=1}^s \abs{g_t(\w,i) - g_t(\w,i-1) - g_t(\w',i) + g_t(\w',i-1)}\\
						  &\leq& \sum_{i=1}^s G_t\norm{\w-\w'} = G_t\cdot s\norm{\w-\w'},
\end{eqnarray*}
where the first inequality follows by triangle inequality and the second inequality follows by a repeated application of the Lipschitz property of these loss functions in the original online model (i.e. with batch size $s = 1$). This establishes the Lipschitz constant in this model as $G_t^s \leq s \cdot G_t$. Now, the usual FTRL analysis gives us the following (note that there are only $T/s$ time steps now)
\[
\sum_{t=1}^{T/s} \L_t(\vecw_t) \leq \ell_\P(\vecx_{1:T}, y_{1:T}, \vecw_*) + \frac{\eta}{2}\|\vecw_*\|_2^2 + \frac{\sum_{t=1}^{T/s}(G^s_t)^2}{\eta} \leq \ell_\P(\vecx_{1:T}, y_{1:T}, \vecw_*) + 2s\|\vecw_*\|_2\sqrt{\sum_{t=1}^{T/s}G_t^2},
\]
by setting $\eta$ appropriately. Now, for \preck, $G_t \leq 8$. Thus, we have
\[
\frac{1}{T}\sum_{t=1}^{T/s} \L_t(\vecw_t) \leq \frac{1}{T}\ell_\P(\vecx_{1:T}, y_{1:T}, \vecw_*) + 6\|\vecw_*\|_2\sqrt\frac{s}{T},
\]
which establishes the result for \preck. Similarly, for pAUC, we can show that the regret in the batch model does not worsen by more than a factor of $\sqrt s$.
\end{proof}

\subsection{Online-to-batch Conversion for pAUC}
We will consider the \tpmb setting where negative points come as a stream and positive points reside in an in-memory buffer. At each trial $t$, the learner receives a batch of $s$ negative points $\Xbb^-_t = \{\x^-_{t,1}, \ldots, \x^-_{t,s}\}$ (we shall assume throughout, for simplicity, that $s\beta$ is an integer). Let us denote the loss w.r.t all the positive points in the buffer by $\phi_+: \W \times \R \rightarrow [0, B]$. $\phi_+$ is defined using a loss function $g(\cdot)$ such as hinge loss or logistic loss as
\[
\phi_+(\w, c) = \frac{1}{B}\sum_{i=1}^B g(\w^\top\x^+_i - c)
\]
For sake of brevity, we will abbreviate $\phi_+(\w,c)$ as $\phi_+(c)$, the reference to $\w$ being clear from context. We assume that $\phi_+$ is monotonically increasing (as is the case for hinge loss and logistic regression) and bounded i.e. for some fixed $B > 0$, we have, for all $\w \in \W, c \in \R$, $0 \leq \phi_+(\w,c) \leq B$. The empirical (unnormalized) partial AUC loss for a model $\w \in \W \subseteq \R^d$ over the negative points received in $t$ trials is then given by
\[
\tilde\ell_\paucc(\Xbb^-_{1:t}, \w) \,=\, \sum_{\tau=1}^{t} \sum_{q = 1}^s \T^-_{\beta, t}(\x_{\tau, q}^-, \w) \, \phi_+(\w^\top \x_{\tau,q}^-),
\]
where $\T^-_{\beta, t}(\x^-, \w)$ is the (empirical) indicator function that is turned on whenever $\x^-$ appears in the top-$\beta$ fraction of all the negatives seen till now, ordered by $\w$, i.e. $\T_{\beta,t}^-(\x^-,\w) = 1$ whenever $\abs{\bc{\tau \in [t], q \in [s] : \w^\top\x^->\w^\top\x^-_{\tau, q}}} \leq t s\beta$. We similarly define a population version of this empirical loss function as
\[
\Pff_\paucc(\w) \,=\, \Expp{\x^-}{ \T^-_{\beta}(\x^-, \w) \, \phi_+(\w^\top \x^-)},
\]
where $\T^-_{\beta}(\x^-, \w)$ is the population indicator function with $\T_{\beta}^-(\x^-,\w) = 1$ whenever $\Probb_{\widetilde{\x}^-}\big(\w^\top\widetilde{\x}^- > \w^\top\x^-\big) \leq \beta$. Also, we define $\L_t(\w) = \ell_\paucc(\Xbb^-_{1:t},\w) \,-\, \ell_\paucc(\Xbb^-_{1:t-1},\w)$, with the regret of a learning algorithm that generates an ensemble of models $\w_1,\w_2,\ldots,\w_{T/s} \in \W \subseteq \R^d$ upon receiving $T/s$ batches of negative points $\Xbb^-_{1:T/s}$ defined as:
\[
R(T,s) = \frac{1}{T}\sum_{t=1}^{T/s}\L_t(\w_t) - \mathop{\arg\min}_{\w\in \W}\frac{1}{T}\tilde\ell_\paucc(\Xbb^-_{1:T/s}, \w).
\]
Define $\beta_t = \Expp{\x^-}{\T^-_{\beta, t-1}(\x^-, \w_t)}$ as the fraction of the population that can appear in the top $\beta$ fraction of the set of points seen till now, i.e. the fraction of the population for which the empirical indicator function is turned on, and
\[
\Q_t(\w) = \Expp{\x^-}{\T^-_{\beta, t-1}(\x^-, \w) \, \phi_+(\w^\top \x^-)}
\]
as the population partial AUC computed with respect to the empirical indicator function $\T^-_{\beta, t-1}$ (note that the population risk functional $\Pff_\paucc(\w)$ is computed with respect to $\T^-_{\beta}(\x^-, \w)$, the population indicator function instead). We will also find it useful to define the following conditional expectation.
\[
\exxL_t(\w) = \Expp{\Xbb^-_t}{\L_t(\w)\,|\,\Xbb^-_{1:t-1}}.
\]
We now present a proof sketch of the online-to-batch conversion result in Theorem~\ref{thm:otb-main} for pAUC.
\begin{thm}[Online-to-batch Conversion for pAUC]
\label{thm:otb-pauc}
Suppose the sequence of negative points $\x^-_1, \ldots, \x^-_T$ is generated i.i.d.. Let us partition this sequence into $T/s$ batches of size $s$ and let $\w_1,\w_2,\ldots,\w_{T/s}$ be an ensemble of models generated by an online learning algorithm upon receiving these $T/s$ batches. Suppose the online learning algorithm has a guaranteed regret bound $R(T,s)$. Then for $\overline{\w} = \frac{1}{T/s} \sum_{t=1}^{T/s} \w_t$, any $\w^\ast \in \W \subseteq \R^d$, $\epsilon \in (0, 1]$ and $\delta > 0$, with probability at least $1 - \delta$,
\[
\Pff_\paucc(\overline\w) \,\leq\, (1+\epsilon)\Pff_\paucc(\w^\ast) \,+\, \frac{1}{\beta}R(T,s)  \,+\, e^{-\Om{s\epsilon^2}} \,+\,  \softO{\sqrt{\frac{s\ln(1/\delta)}{T}}}.
\]
In particular, setting $s =\tilde{\cal O}(\sqrt T)$ and $\epsilon = \sqrt[4]{\rfrac{1}{T}}$ gives us, with probability at least $1 - \delta$,
\[
\Pff_\paucc(\overline\w) \,\leq\, \Pff_\paucc(\w^\ast) \,+\, \frac{1}{\beta}R(T,\sqrt{T}) \,+\, \softO{\sqrt[4]{\frac{\ln(1/\delta)}{T}}}.
\]
\end{thm}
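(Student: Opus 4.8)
The plan is to run the standard online-to-batch conversion---average the online iterates $\w_1,\dots,\w_{T/s}$ into $\overline\w$, pass to the loss of the average by convexity, and trade the time-averaged empirical penalty for a population quantity via concentration---but two extra reductions are forced by the non-decomposability of $\ell_\paucc$: the penalty $\L_t$ is a difference of \emph{full-history} losses rather than a per-example loss, and the ``top-$\beta$'' selection inside $\tilde\ell_\paucc$ is governed by an \emph{empirical} quantile of the scores $\ip{\w}{\x^-}$, which drifts across rounds and does not agree with the population quantile defining $\Pff_\paucc$. The multiplicative slack $(1+\epsilon)$ and the additive $e^{-\Om{s\epsilon^2}}$ in the statement are exactly the price of reconciling these two selectors. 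I would first record convexity: using the identity $\sum_{\text{top }k}v_i = \min_c\bc{kc + \sum_i(v_i-c)_+}$ together with the fact that each $v_i = \phi_+(\ip{\w}{\x^-_i})$ is convex in $\w$ (a convex increasing $\phi_+$ composed with a linear map), both $\tilde\ell_\paucc$ and $\Pff_\paucc(\w) = \min_c\bc{\beta c + \Expp{\x^-}{(\phi_+(\ip{\w}{\x^-})-c)_+}}$ are convex, so Jensen gives $\Pff_\paucc(\overline\w) \le \frac1{T/s}\sum_{t=1}^{T/s}\Pff_\paucc(\w_t)$.

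Next I would pass from $\Pff_\paucc(\w_t)$ through the intermediate $\Q_t(\w_t)$ (the same population expectation, but taken with the empirically-defined indicator $\T^-_{\beta,t-1}$) and then to $\exxL_t(\w_t)$. For the first transition, a uniform convergence bound for the empirical distribution function of the scores over the class $\bc{\x\mapsto\ip{\w}{\x} : \w\in\W}$---and this is exactly where the Lipschitz-at-each-rank property of Lemma~\ref{lem:list} is needed, to control how the induced order statistics move with $\w$---shows that, uniformly over $\w$, the empirically selected fraction $\beta_t(\w)$ lies in $[(1-\epsilon)\beta,(1+\epsilon)\beta]$ outside an event of probability $e^{-\Om{(t-1)s\epsilon^2}} \le e^{-\Om{s\epsilon^2}}$ (using $t\ge2$, so the pool has at least $s$ points); on the good event, monotonicity of $\phi_+$ gives $\Pff_\paucc(\w_t) \le \tfrac{\beta}{\beta_t(\w_t)}\Q_t(\w_t) \le (1+\O{\epsilon})\Q_t(\w_t)$, and on the bad event both sides are crudely at most the trivial range $\beta B$, whose contribution to the average is absorbed into the $e^{-\Om{s\epsilon^2}}$ term. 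For the second transition, feeding the minimizer of the $\min_c$ (CVaR) problem on the first $(t-1)s$ points into the problem on all $ts$ points (an envelope/suboptimality estimate) shows that $\L_t(\w)$ equals $s$ times the per-round population partial AUC at the empirical threshold, up to an $\O{B}$ correction from the $\O{\sqrt{s\log(1/\delta)}}$ points near the threshold whose membership flips; taking conditional expectation gives $\exxL_t(\w) = s\,\Q_t(\w) + \xi_t(\w)$ with $\frac1T\sum_t\abs{\xi_t(\w)} = \softO{\sqrt{s\ln(1/\delta)/T}}$ after a union bound over rounds.

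It then remains to connect $\exxL_t$ to the observed penalties and invoke the regret guarantee. Since $\w_t$ is $\Xbb^-_{1:t-1}$-measurable, $\L_t(\w_t)-\exxL_t(\w_t)$ is a bounded (by $\O{sB}$) martingale difference sequence, so Azuma--Hoeffding gives $\frac1T\sum_t\L_t(\w_t) \ge \frac1T\sum_t\exxL_t(\w_t) - \softO{\sqrt{s\ln(1/\delta)/T}}$. On the other side, the telescoping identity $\sum_t\L_t(\w) = \tilde\ell_\paucc(\Xbb^-_{1:T/s},\w)$ and the definition of $R(T,s)$ give $\frac1T\sum_t\L_t(\w_t) \le \frac1T\tilde\ell_\paucc(\Xbb^-_{1:T/s},\w^\ast) + R(T,s)$, and since $\frac1T\tilde\ell_\paucc(\Xbb^-_{1:T/s},\w^\ast)$ is the empirical counterpart of $\Pff_\paucc(\w^\ast)$ for a \emph{fixed} comparator, a one-dimensional concentration over the minimizing $c$ bounds it by $\Pff_\paucc(\w^\ast) + \softO{\sqrt{\ln(1/\delta)/T}}$ (no union bound over $\W$ is needed here). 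Chaining all of the above, running each per-round statement at confidence $1-\delta s/T$ so that the union over the $T/s$ rounds costs only a logarithmic factor, and tracking the normalization---the $\tfrac1\beta$ appears because $R(T,s)$ is measured against the \emph{unnormalized} $\tilde\ell_\paucc$, which places weight on only a $\beta$-fraction of the points, whereas $\Pff_\paucc$ and $\Q_t$ are written on the selected-set scale---yields the first display; the second follows by setting $s=\tilde{\cal O}(\sqrt T)$ and $\epsilon = \sqrt[4]{\rfrac1T}$.

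The hard part is the first transition in the second paragraph: the selector is not a fixed function of $\x^-$ but an order statistic of the entire pool seen so far, which itself varies with $\w$. Producing a bound that is simultaneously uniform over $\w\in\W$, only \emph{multiplicatively} (not additively in $\beta$) lossy, and that decays gracefully as $e^{-\Om{s\epsilon^2}}$ rather than blowing up in the early rounds where the pool is small, is the delicate step, and is precisely where Lemma~\ref{lem:list} enters. A minor nuisance is preventing the several $\softO{\sqrt{s\ln(1/\delta)/T}}$-sized error terms from picking up an extra $T/s$ factor when summed over rounds, which is why each per-round estimate is proved at confidence $1-\delta s/T$.
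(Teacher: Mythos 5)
Your proposal follows essentially the same route as the paper: the same four-way decomposition into a comparator concentration term, a regret term, an Azuma--Hoeffding martingale term, and a residual term reconciling the empirical top-$\beta$ selector with the population quantile (bounded multiplicatively up to $e^{-\Om{s\epsilon^2}}$ via the structural lemma), followed by the same union-bounding and normalization. The only differences are cosmetic refinements — you make the Jensen step explicit via the CVaR representation and observe that the fixed comparator needs only pointwise rather than uniform concentration — neither of which changes the argument.
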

\begin{proof}[Proof (Sketch)]
Fix $\epsilon \in (0, 0.5]$. We wish to bound the difference
\begin{align}
\label{eq:otb-diff}
(1-\epsilon)s\beta  \sum_{t=1}^{T/s} \Pff_\paucc(\w_t)\,-\,  T\beta \Pff_\paucc(\w_\ast)
\end{align}
and do so by decomposing \eqref{eq:otb-diff} into four terms as shown below.
\[
\text{\eqref{eq:otb-diff}} \leq \sum_{t=1}^{T/s}RE_t(\w_t) \,+\, MC(\w_{1:T/s}) \,+\, R(\w_{1:T/s}) \,+\, UC(\w_\ast),
\]
where we have
\begin{align}
UC(\w_\ast) &=& \tilde\ell_\paucc(\Xbb^-_{1:T/s}, \w_\ast) \,-\,  T\beta \Pff_\paucc(\w_\ast) \tag{Uniform Convergence Term}\\
R(\w_{1:T/s}) &=& \sum_{t=1}^{T/s} \L_t(\w_t) \,-\, \sum_{t=1}^{T/s} \L_t(\w_\ast) \tag{Regret Term}\\
MC(\w_{1:T/s}) &=& \sum_{t=1}^{T/s} \exxL_t(\w_t) \,-\, \sum_{t=1}^{T/s} \L_t(\w_t) \tag{Martingale Convergence Terms}\\
RE_t(\w_t) &=& (1-\epsilon)s\beta \Pff_\paucc(\w_t) \,-\, \exxL_t(\w_t) \tag{Residual Error Terms}
\end{align}
Note that the above has used the fact that $\tilde\ell_\paucc(\Xbb^-_{1:T/s}, \w_\ast) = \sum_{t=1}^{T/s} \L_t(\w_\ast)$.

We will bound these terms in order below. First we look at the term $UC(\w_\ast)$. Bounding this simply requires a batch generalization bound of the form we prove in Theorem~\ref{thm:gen-bound-rank}. Thus, we can show, that with probability $1 - \delta/3$, we have
\[
UC(\w_\ast) \leq \O{\sqrt{T\log(1/\delta)}}.
\]
We now move on the term $R(\w_{1:T/s})$. This is simply bounded by the regret of the ensemble $\w_{1:T/s}$. This gives us
\[
R(\w_{1:T/s}) \leq T\cdot R(T,s).
\]
The next term we bound is $MC(\w_{1:T/s})$. Note that by definition of $\exxL_t(\w)$, if we define
\[
v_t = \exxL_t(\w_t) - \L_t(\w_t),
\]
then the terms $\bc{v_t}$ form a martingale difference sequence. Since $\big|\exxL_t(\w_t) - \L_t(\w_t)\big| \leq \O s$, we get, by an application of the Azuma-Hoefding inequality, with probability at least $1 - \delta/3$,
\[
MC(\w_{1:T/s}) \leq \O{s\sqrt{\frac{T}{s}\ln\frac{1}{\delta}}} = \O{\sqrt{sT\ln(1/\delta)}}.
\]
The last step requires us to bound the residual term $RE_t(\w_t)$ which will again require uniform convergence techniques. We shall show, that with probability, at least $1 - (\delta\cdot s/3T)$, we have 
\[
\beta_t \geq \beta - \softO{\sqrt\frac{\log\frac{1}{\delta}}{s(t-1)}}.
\]
This shall allow us to show that with the same probability, we have
\[
\Q_{t}(\w_t) - \Pff_\paucc(\w_t) \leq \softO{\sqrt\frac{\log\frac{1}{\delta}}{s(t-1)}}.
\]
The last ingredient in the proof shall involve showing that the following holds for any $\epsilon > 0$
\[
\exxL_t(\w_t) \geq (1-\epsilon)s\beta_t\Q_t(\w_t) - \Om{s\exp(-s\beta_t^2\epsilon^2)}
\]
Combining the above with a union bound will show us that, with probability at least $1 - \delta/3$,
\[
\sum_{i=1}^{T/s}RE_t(\w_t) \leq \O{T\exp(-s\epsilon^2)} + \softO{\sqrt{sT\log(1/\delta)}}
\]
A final union bound and some manipulations would then establish the claimed result.
\end{proof}
\section{Proof of Theorem~\ref{thm:erm}}
\label{app:proof-genbound-pauc}
The proof proceeds in two parts: the first part uses the fact that the \spmb method essentially simulates the GIGA method of \cite{zinkevich} with the non-decomposable loss function and the second part uses the uniform convergence properties of the loss function to establish the error bound. To proceed, let us set up some notation. Consider the $e\th$ epoch of the \spmb algorithm. Let us denote the set of points considered in this epoch by $X_e = \bc{x^e_1,\ldots,x^e_s}$. With this notation it is clear that the \spmb algorithm can be said to be performing online gradient descent with respect to the instantaneous loss functions $\L_e(\w) = \L(X_e,\w) := \ell_\P(\x^e_{1:s},y^e_{1:s},\vecw)$.

Since the loss function $\L_e(\w)$ is convex, the standard analysis for online convex optimization would apply under mild boundedness assumptions on the domain and the (sub)gradients of the loss function. Since there are $n/s$ epochs (assuming for simplicity that $n$ is a multiple of $s$), this allows us to use the standard regret bounds \cite{zinkevich} to state the following:
\[
\frac{s}{n}\sum_{e=1}^{n/s}\L_e(\w_e) \leq \frac{s}{n}\sum_{e=1}^{n/s}\L_e(\w_\ast) + \O{\sqrt{\frac{s}{n}}}.
\]
Now we will invoke uniform convergence properties of the loss function. However, doing so requires clarifying certain aspects of the problem setting. The statement of Theorem~\ref{thm:erm} assumes only a random ordering of training data points whereas uniform convergence properties typically require i.i.d. samples. We reconcile this by noticing that all our uniform convergence proofs use the Hoeffding's lemma to establish statistical convergence and that the Hoeffding's lemma holds when random variables are sampled without replacement as well (e.g. see \cite{Serfling74}). Since a random ordering of the data provides, for each epoch, a uniformly random sample without replacement, we are able to invoke the uniform convergence proofs.

Thus, if we denote $\L(\vecw) := \ell_\P(\x_{1:n},y_{1:n},\w)$, then by using the uniform convergence properties of the loss function, for every $e$, with probability at least $1 -\frac{s\delta}{n}$, we have $\L_e(\w_e) \geq \L(\w_e) - \alpha\br{s,\frac{s\delta}{n}}$ as well as $\L_e(\w_\ast) \leq \L(\w_\ast) + \alpha\br{s,\frac{s\delta}{n}}$. Applying the union bound and Jensen's inequality gives us, with probability at least $1 - \delta$, the desired result:
\[
\L(\overline\w) \leq \frac{s}{n}\sum_{e=1}^{n/s}\L(\w_e) \leq \L(\w_\ast) + 2\alpha\br{s,\frac{s\delta}{n}} + \O{\sqrt{\frac{s}{n}}}.
\]

We note that we can use similar arguments as above to give error bounds for the \tpmb procedure as well.
Suppose $\bx^+_{1:s_+}$ and $\bx^-_{1:s_-}$ are the positive and negative points sampled in the process (note that here the number of positive and negatives points (i.e. $s_+$ and $s_-$ respectively) are random quantities as well). Also suppose $\x^+_{1:n_+}$ and $\x^-_{1:n_-}$ are the positive and negative points in the population. Then recall that Definition~\ref{def:uc} requires, for a uniform (but possibly without replacement) sample,
\[
\underset{\w\in\W}{\sup}\abs{\ell_\P(\x^+_{1:n_+},\x^-_{1:n_-},\w)-\ell_\P(\bx^+_{1:s_+}, \bx^-_{1:s_-},\w)} \leq \softO{\alpha(s,\delta)}.
\]
To prove bounds for \tpmb, we require that for arbitrary choice of $s_+, s_- \geq \Om{s}$, when $\bx^+_{1:s_+}$ and $\bx^-_{1:s_-}$ are chosen separately and uniformly (but yet again possibly without replacement) from $\x^+_{1:n_+}$ and $\x^-_{1:n_-}$ respectively, we still obtain a similar result as above.
Since the first pass and each epoch of the second pass provide such a sample, we can use this result to prove error bounds for the \tpmb procedure. We defer the detailed arguments for such results to the full version of the paper.

We however note that the proof of Theorem~\ref{thm:gen-bound-rank} below does indeed prove such a result for the pAUC loss function by effectively proving (see Section~\ref{sec:point-conv-pauc}) the following two results
\begin{eqnarray*}
\underset{\w\in\W}{\sup}\abs{\ell_\P(\x^+_{1:n_+},\bx^-_{1:s_-},\w)-\ell_\P(\bx^+_{1:s_+}, \bx^-_{1:s_-},\w)} &\leq& \softO{\alpha(s,\delta)}\\
\underset{\w\in\W}{\sup}\abs{\ell_\P(\x^+_{1:n_+},\x^-_{1:n_-},\w)-\ell_\P(\x^+_{1:n_+},\bx^-_{1:s_-},\w)} &\leq& \softO{\alpha(s,\delta)}.
\end{eqnarray*}

\section{Uniform Convergence Bounds for Partial Area under the ROC Curve}
\label{app:uc-proof-pauc}
In this section we present a proof sketch of Theorem~\ref{thm:gen-bound-rank} which we restate below for convenience.
\begin{thm}
Consider any convex, monotonic and Lipschitz classification surrogate $\phi : \R \rightarrow \R_+$. Then the loss function for the $(0,\beta)$-partial AUC performance measure defined as follows exhibits uniform convergence at the rate $\alpha(s) = \softO{1/{\sqrt s}}$:
\[
\ell_\P(\x_{1:n},y_{1:n},\w) = \frac{1}{\beta n_+n_-}\sum_{i=1}^n\Ind{y_i > 0}\sum_{j=1}^n\Ind{y_j < 0}\T^-_{\beta, n}(\x_j,\w)\phi\br{\w^\top(\x_i-\x_j)},
\]
where $n_+ = \abs{\bc{i : y_i > 0}}$ and $n_- = \abs{\bc{i : y_i < 0}}$.
\end{thm}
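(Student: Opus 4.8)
The plan is to prove the bound by the two-step reduction recorded at the end of Appendix~\ref{app:proof-genbound-pauc}: split the difference between the loss on the full sample and the loss on a uniform subsample of size $s$ through the intermediate quantity $\ell_\P(\x^+_{1:n_+},\bx^-_{1:s_-},\w)$ (full positives, subsampled negatives), so that it suffices to separately bound (i) $\sup_{\w\in\W}\abs{\ell_\P(\x^+_{1:n_+},\bx^-_{1:s_-},\w)-\ell_\P(\bx^+_{1:s_+},\bx^-_{1:s_-},\w)}$ and (ii) $\sup_{\w\in\W}\abs{\ell_\P(\x^+_{1:n_+},\x^-_{1:n_-},\w)-\ell_\P(\x^+_{1:n_+},\bx^-_{1:s_-},\w)}$. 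Throughout I would use that $\norm{\x}_2\leq 1$ and $\W$ is bounded, so that, $\phi$ being Lipschitz, all per-pair terms $\phi(\w^\top(\x_i-\x_j))$ lie in a fixed interval $[0,B]$, and that Hoeffding-type concentration still applies to sampling without replacement (the remark on \cite{Serfling74}). Term (i) is the routine direction: the negative set, and hence the truncation operator $\T^-_\beta$, is the same on both sides, so the loss is, for each fixed $\w$, $\frac{1}{\beta s_-}$ times a sum over the $s_-$ fixed negatives of a bounded average over the positives; a covering/Rademacher argument over the linear class $\bc{\x\mapsto\w^\top\x:\w\in\W}$ (contraction for the Lipschitz $\phi$), for each of the $s_-$ fixed negatives, combined by a union bound, gives deviation $\softO{\sqrt{\log(1/\delta)/s}}$.

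The heart of the argument is term (ii), where the negative sample changes and so does the ``top-$\beta$ fraction'' it induces. Here I would first eliminate the combinatorial selection by a Lagrangian reformulation. Because $\phi$ is monotone, for every fixed positive $\x_i$ the negatives are ordered by $\phi(\w^\top(\x_i-\x_j))$ in the same way (up to a global reversal) as by their scores $\w^\top\x_j$; hence the set of $\lceil\beta n_-\rceil$ top-scored negatives picked out by $\T^-_{\beta,n}(\cdot,\w)$ is simultaneously, for every $i$, the set maximising $\sum_{j\in S}\phi(\w^\top(\x_i-\x_j))$ over $\abs S=\lceil\beta n_-\rceil$, and therefore also maximises $\sum_{j\in S}a_j(\w)$ where $a_j(\w):=\frac1{n_+}\sum_{i:y_i>0}\phi(\w^\top(\x_i-\x_j))$. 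Using the identity $\sum_{j\le k}a_{(j)}=\min_{\theta}\bs{k\theta+\sum_j\br{a_j-\theta}_+}$ for the top-$k$ sum, the loss becomes $\frac{1}{\beta n_-}\min_{\theta\in[0,B]}\bs{\lceil\beta n_-\rceil\,\theta+\sum_{j:y_j<0}\br{a_j(\w)-\theta}_+}$, which for fixed $(\w,\theta)$ is \emph{decomposable over the negatives}. Since $\abs{\min_\theta A(\w,\theta)-\min_\theta B(\w,\theta)}\le\sup_\theta\abs{A(\w,\theta)-B(\w,\theta)}$, term (ii) reduces to a uniform deviation bound for the single-argument family $h_{\w,\theta}(\x):=\br{a_\w(\x)-\theta}_+$, with $a_\w(\x):=\frac1{n_+}\sum_{i:y_i>0}\phi(\w^\top(\x_i-\x))$, over $\w\in\W$ and $\theta\in[0,B]$, together with an $\O{1/s}$ term absorbing the rounding of $\beta s_-$.

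For that deviation bound I would estimate the Rademacher complexity of $\bc{h_{\w,\theta}}$: the map $a_\w(\cdot)$ is a Lipschitz function of the single linear functional $\w^\top\x$, so by the contraction principle its empirical Rademacher complexity over a sample of $s_-$ negatives is $\O{1/\sqrt{s_-}}$; applying the $1$-Lipschitz map $(\cdot-\theta)_+$ and taking the supremum over the one extra scalar $\theta$ does not change this order, and a bounded-differences inequality then yields the high-probability bound $\softO{\sqrt{\log(1/\delta)/s}}$. Lemma~\ref{lem:list} (the structural lemma) enters this step at two points: it provides the Lipschitz-in-$\w$ control of the thresholded order statistics of the negative scores that legitimises passing from $\sup_{\w\in\W}$ to a supremum over a finite $\epsilon$-net of $\W$, and it quantifies the effect on the truncated loss of a one-position shift in the sorted list, which is what bounds the rounding term above. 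Combining (i) and (ii) and a union bound over the failure probabilities gives the claimed rate $\alpha(s,\delta)=\O{\sqrt{\log(1/\delta)/s}}$.

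I expect the main obstacle to be precisely the data-dependent truncation $\T^-_\beta$: as a set, the top-$\beta$ negatives of the subsample can differ substantially from those of the full sample, and this coupling a priori destroys the decomposability one needs both for concentration and for covering $\W$. The combination of the Lagrangian threshold reformulation (which replaces the cardinality constraint by a one-dimensional minimisation, leaving a decomposable objective for each $(\w,\theta)$) with the structural Lemma~\ref{lem:list} (which supplies the $\w$-Lipschitzness for the covering argument and controls the $\beta$-rounding error) is what is needed to get past it. The same two ingredients also deliver the two refined displays at the end of Appendix~\ref{app:proof-genbound-pauc} required for the \tpmb analysis, since the reformulation already decouples the roles of the positive and the negative samples.
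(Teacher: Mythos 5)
Your proposal is correct in outline and, for the step that actually matters, goes further than the paper's own sketch. The skeleton is the same: you split the deviation through the intermediate quantity $\ell_\P(\x^+_{1:n_+},\bx^-_{1:s_-},\w)$ exactly as in the two displays at the end of Appendix~\ref{app:proof-genbound-pauc}, handle the positive-resampling term by standard bounded concentration (the paper uses Hoeffding without replacement pointwise plus an $\epsilon$-net; your Rademacher phrasing is equivalent), and isolate the negative-resampling term as the hard case because the truncation $\T^-_\beta$ is sample-dependent. Where you genuinely diverge is in how you kill that dependence: the paper's sketch reduces it to the per-positive statement $\abs{\ell^+(\x_i,\w)-\ell^+_{S_-}(\x_i,\w)}\leq\softO{1/\sqrt s}$ and defers its proof to the full version, whereas you make the selection explicit via the variational identity $\sum_{j\leq k}a_{(j)}=\min_\theta\bs{k\theta+\sum_j(a_j-\theta)_+}$, which is legitimate here because monotonicity of $\phi$ makes the top-$\beta$ scored negatives simultaneously maximize every $a_j(\w)$ (the same observation the paper itself uses with Danskin's theorem in Appendix~\ref{app:exps-pauc}). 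This buys you a per-negative decomposable objective for each fixed $(\w,\theta)$, so Hoeffding--Serfling plus a net over $\W\times[0,B]$ finishes the job; the paper's route keeps the combinatorial truncation and must control it directly. Two small corrections to your accounting: the contraction-principle step is not a literal application (the Lipschitz map $u\mapsto\frac{1}{n_+}\sum_i\phi(\w^\top\x_i-u)$ itself depends on $\w$), but the covering argument you also invoke makes this moot since $(a_\w(\x)-\theta)_+$ is directly $\O{1}$-Lipschitz in $(\w,\theta)$; and for the same reason Lemma~\ref{lem:list} is not actually needed once the Lagrangian reformulation is in place --- its role in the paper is precisely to supply the Lipschitz-in-$\w$ control of the truncated loss that your reformulation renders unnecessary.
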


\begin{proof}[Proof (Sketch)]
We shall use the notation $\hT^-_{\beta,s}$ to denote the indicator function for the top $\beta$ fraction of the negative elements in the smaller sample of size $s$. Thus, over the smaller sample $(\bx_1,\by_1)\ldots(\bx_s,\by_s)$, the pAUC is calculated as
\[
\ell_\P(\bx_{1:s},\by_{1:s},\w) = \frac{1}{\beta s_+s_-}\sum_{i=1}^s\Ind{\by_i > 0}\sum_{j=1}^s\Ind{\by_j < 0}\hT^-_{\beta, s}(\bx_j,\w)\phi\br{\w^\top(\bx_i-\bx_j)}.
\]
Our goal would be to show that with probability at least $1 - \delta$, for all $\w \in \W$
\[
\abs{\ell_\P(\x_{1:n},y_{1:n},\w) - \ell_\P(\bx_{1:s},\by_{1:s},\w)} \leq \softO{\frac{1}{\sqrt s}}
\]
We shall demonstrate this by establishing the following three statements:
\begin{enumerate}
	\item For any fixed $\w\in\W$, w.h.p., we have $\abs{\ell_\P(\x_{1:n},y_{1:n},\w) - \ell_\P(\bx_{1:s},\by_{1:s},\w)} \leq \softO{\frac{1}{\sqrt s}}$
	\item For any two $\w,\w'\in\W$, we have $\abs{\ell_\P(\x_{1:n},y_{1:n},\w)-\ell_\P(\x_{1:n},y_{1:n},\w')} \leq \O{\norm{\w-\w'}_2}$
	\item For any two $\w,\w'\in\W$, we have $\abs{\ell_\P(\bx_{1:s},\by_{1:s},\w)-\ell_\P(\bx_{1:s},\by_{1:s},\w')} \leq \O{\norm{\w-\w'}_2}$
\end{enumerate}
With these three results established, we would be able to conclude the proof by an application of a standard covering number argument. We now prove these three statements in parts.

\subsection{Part 1: Pointwise Convergence for pAUC}
\label{sec:point-conv-pauc}
Fix a predictor $\w\in\W$ and $S_+$ and $S_-$ denote the set of positive and negative samples. We shall assume that $s_+,s_- \geq \Om{s}$ which holds with high probability. Denote, for any $\x_i$ such that $y_i > 0$,
\[
\ell^+(\x_i,\w) = \frac{1}{\beta n_-}\sum_{j=1}^n\Ind{y_j < 0}\T^-_{\beta, n}(\x_j,\w)\phi\br{\w^\top(\x_i-\x_j)},
\]
and for any $\bx_i \in S_+$,
\[
\ell^+_{S_-}(\bx_i,\w) = \frac{1}{\beta s_-}\sum_{j=1}^s\Ind{\by_j < 0}\hT^-_{\beta, s}(\bx_j,\w)\phi\br{\w^\top(\bx_i-\bx_j)}.
\]
Notice that $\ell_\P(\bx_{1:s},\by_{1:s},\w) = \frac{1}{n_+}\sum_{i=1}^n\Ind{y_i > 0}\ell^+(\x_i,\w)$ and $\ell_\P(\bx_{1:s},\by_{1:s},\w) = \frac{1}{s_+}\sum_{i=1}^s\Ind{\by_i > 0}\ell^+_{S_-}(\bx_i,\w)$.  We shall now show the following holds w.h.p. over $S_-$:
\begin{enumerate}
	\item For any $\x_i$ such that $y_i > 0$, $\abs{\ell^+(\x_i,\w) - \ell^+_{S_-}(\x_i,\w)} \leq \softO{\frac{1}{\sqrt s}}$.
	\item $\frac{1}{n_+}\abs{\sum_{i=1}^n\Ind{y_i > 0}\ell^+(\x_i,\w) - \Ind{y_i > 0}\ell^+_{S_-}(\x_i,\w)} \leq \softO{\frac{1}{\sqrt s}}$.
	\item $\abs{\frac{1}{n_+}\sum_{i=1}^n\Ind{y_i > 0}\ell^+_{S_-}(\x_i,\w) - \frac{1}{s_+}\sum_{i=1}^s\Ind{\by_i > 0}\ell^+_{S_-}(\bx_i,\w)} \leq \softO{\frac{1}{\sqrt s}}$.
\end{enumerate}
The second part follows from the first part by an application of the triangle inequality. The third part also can be shown to hold by an application of Hoeffding's inequality and other arguments. This leaves the first part for which we provide a proof in the full version of the paper.

\subsection{Parts 2 and 3: Establishing an $\epsilon$-net for pAUC}
For simplicity, we assume that the domain is finite. This does not affect the proof in any way since it still allows the domain to be approximated arbitrary closely by an $\epsilon$-net of (arbitrarily) large size. However, we note that we can establish the same result for infinite domains as well, but choose not to for sake of simplicity. We prove the second part, the proof of the first part being similar. We have
\begin{eqnarray*}
\abs{\ell_\P(\x_{1:n},y_{1:n},\w) - \ell_\P(\x_{1:n},y_{1:n},\w')} &=& \frac{1}{s_+}\abs{\sum_{i=1}^s\Ind{\by_i > 0}\ell^+_{S_-}(\x_i,\w) - \Ind{\by_i > 0}\ell^+_{S_-}(\x_i,\w')}\\
&\leq& \frac{1}{s_+}\sum_{i=1}^s\abs{\Ind{\by_i > 0}\br{\ell^+_{S_-}(\x_i,\w) - \ell^+_{S_-}(\x_i,\w')}}\\
&\leq&	\O{\norm{\w-\w'}_2},
\end{eqnarray*}
using Lemma~\ref{lem:list} with $g(a) = \phi(\w^\top\x_i - a)$ and $c_i = 0$. This concludes the proof.
\end{proof}

\section{Methodology for implementing \spmb and \tpmb for pAUC tasks}
\label{app:exps-pauc}
In this section we clarify the mechanisms used to implement the \spmb and \tpmb routines. Going as per the dataset statistics (see Table~\ref{tab:dataset-stats}), we will consider the variant of the \tpmb routine with the positive class as the rare class. Recall the definition of the surrogate loss function for pAUC \eqref{eq:pauc_cvx}
\[
\ell_{\text{pAUC}}(\w) = \sum_{i:y_i>0}\sum_{j:y_j<0}\mathbb{T}^-_{\beta, t}(\vecx_j, \vecw)\cdot h(\vecx_i^\top\vecw- \vecx_j^\top\vecw).
\]
We now rewrite this in a slightly different manner. Define, for any $i : y_i > 0$
\[
\ell^+_{S_-}(x_i,\w) = \sum_{j: y_j < 0}\T^-_{\beta, t}(\x_j,\w)\cdot h(\x_i^\top\w - \x_j^\top\w),
\]
so that we can write $\ell_{\text{pAUC}}(\w) = \sum_{i:y_i>0}\ell^+_{S_-}(x_i,\w)$. This shows that a subgradient to $\ell_{\text{pAUC}}(\w)$ can be found by simply finding and summing up, subgradients for $\ell^+_{S_-}(x_i,\w)$. For now, fix an $i$ such that $y_i > 0$ and define $g(a) = h(\vecx_i^\top\vecw - a)$. Using the properties of the hinge loss function, it is clear that $g(a)$ is an increasing function of $a$. Since $\ell^+_{S_-}(x_i,\w)$ is defined on the top ranked $\lceil \beta t_- \rceil$ negatives, we can, using the monotonicity of $g(\cdot)$, equivalently write it as follows. Let $\ZZ_\beta = \binom{S_-}{\lceil \beta t_- \rceil}$ be the set of all sets of negative points of negative training points of size $\lceil \beta t_- \rceil$. Then we can write
\[
\ell^+_{S_-}(x_i,\w) = \underset{S \in \ZZ_\beta}{\max}\sum_{\x^- \in S} g(\x^{-\top}\w)
\]
Since the maximum in the above formulation is achieved at $S = \bc{j: y_j < 0, \T^-_{\beta, t}(\x_j,\w) = 1}$, by Danskin's theorem (see, for example \cite{Bertsekas04}), we get the following result: let $\v_{ij} \in \delta h(\x_i^\top\w - \x_j^\top\w)$ be a subgradient to the hinge loss function, then for the following vector
\[
\v_i := \sum_{j: y_j < 0}\T^-_{\beta, t}(\x_j,\w)\cdot\v_{ij},
\]
we have $\v_i \in \delta\ell^+_{S_-}(x_i,\w)$ and consequently, for $\v := \sum_{i:y_i>0}\v_i$, we have $\v \in \delta\ell_{\text{pAUC}}(\w)$. This gives us a straightforward way to implement \spmb: for each epoch, we take all the negatives in that epoch, filter out the top $\beta$ fraction of them according to the scores assigned to them by the current iterate $\w_e$ and then calculate the (sub)gradients between all the positives in that epoch and these filtered negatives. This takes at most $\O{s \log s}$ time per epoch.

\end{document}